\definecolor{mydarkblue}{rgb}{0,0.08,0.45}
\newtheorem{corollary}{Corollary}
\newcommand\norm[1]{\left\lVert#1\right\rVert}%
\newcommand\given[1][]{\:#1\vert\:}
 \titlespacing{\section}{0pt}{1ex}{0ex}
 \titlespacing{\subsection}{0pt}{1ex}{0ex}
 \titlespacing{\subsubsection}{0pt}{0.5ex}{0ex}
 \newcommand{\reducespaceafterfigure}{\vspace{-0.5em}} %
\newcommand{\NTKGP}{NTKGP}
\newcommand{\minus}{\mkern1.5mu{-}\mkern1.5mu}
\newcommand{\plus}{\mkern1.5mu{+}\mkern1.5mu}
\newcommand{\myeq}{\mkern1.5mu{=}\mkern1.5mu}
\newcommand{\myin}{\mkern1.5mu{\in}\mkern1.5mu}
\newcommand{\mysim}{\mkern1.5mu{\overset{d}{\sim}}\mkern1.5mu}
\title{Bayesian Deep Ensembles via the \\
Neural Tangent Kernel}
\def\eqref#1{equation~\ref{#1}}
\def\1{\bm{1}}
\newcommand{\train}{\mathcal{D}}
\newcommand{\ThetaXX}{\Theta_{\mathcal X \mathcal X}}
\newcommand{\Thetaxy}[2]{\Theta_{#1 #2}}
\newcommand{\KXX}{\mathcal K_{\mathcal X \mathcal X}}
\newcommand{\K}{\mathcal K}
\newcommand{\X}{\mathcal X}
\newcommand{\Y}{\mathcal Y}
\def\eps{{\epsilon}}
\def\vtheta{{\bm{\theta}}}
\def\vx{{\bm{x}}}
\def\vz{{\bm{z}}}
\DeclareMathAlphabet{\mathsfit}{\encodingdefault}{\sfdefault}{m}{sl}
\SetMathAlphabet{\mathsfit}{bold}{\encodingdefault}{\sfdefault}{bx}{n}
\author{%
  Bobby He\\ %
  Department of Statistics\\
  University of Oxford\\
  \texttt{bobby.he@stats.ox.ac.uk} \\
  \And
  Balaji Lakshminarayanan \\
  Google Research\\ Brain team \\
  \texttt{balajiln@google.com}
  \And
  Yee Whye Teh \\
  Department of Statistics \\
  University of Oxford \\
  \texttt{y.w.teh@stats.ox.ac.uk}
}
\begin{document}
\maketitle
\begin{abstract}
We explore the link between deep ensembles and Gaussian processes (GPs) through the lens of the Neural Tangent Kernel (NTK): a recent development in understanding the training dynamics of wide neural networks (NNs). Previous work has shown that even in the infinite width limit, when NNs become GPs, there is no GP posterior interpretation to a deep ensemble trained with squared error loss. We introduce a simple modification to standard deep ensembles training, through addition of a computationally-tractable, randomised and untrainable function to each ensemble member, that enables a posterior interpretation in the infinite width limit. When ensembled together, our trained NNs give an approximation to a posterior predictive distribution, and we prove that our Bayesian deep ensembles make more conservative predictions than standard deep ensembles in the infinite width limit. Finally, using finite width NNs we demonstrate that our Bayesian deep ensembles faithfully emulate the analytic posterior predictive when available, and can outperform standard deep ensembles in various out-of-distribution settings, for both regression and classification tasks.
\end{abstract}
\section{Introduction}\label{sec:intro}
Consider a training dataset $\mathcal{D}$ consisting of $N$ i.i.d. data points $\mathcal{D} = \{\mathcal{X}, \mathcal{Y}\}= \{(\vx_n, y_n)\}_{n=1}^N$, with $\vx\myin\mathbb{R}^d$ representing $d$-dimensional features and $y$ representing $C$-dimensional targets.  Given input features $\vx$ and parameters $\vtheta\myin\mathbb{R}^p$ we use the output, $f(\vx, \vtheta)\myin\mathbb{R}^C$, of a neural network (NN) to model the predictive distribution $p(y|\vx, \vtheta)$ over the targets. For univariate regression tasks, $p(y|\vx, \vtheta)$ will be Gaussian: $-\textrm{log}\hspace{0.5mm}p(y|\vx, \vtheta)$ is the squared error $\frac{1}{2\sigma^2}(y - f(\vx, \vtheta))^2$ up to additive constant, for fixed observation noise $\sigma^2\myin\mathbb{R}_+$ . For classification tasks, $p(y|\vx, \vtheta)$ will be a Categorical distribution.

Given a prior distribution $p(\vtheta)$ over the parameters, we can define the posterior over $\vtheta$, $p(\vtheta|\train)$, using Bayes' rule and subsequently the \textit{posterior predictive} distribution at a test point $(\vx^*, y^*)$:
\begin{align}\label{eqn:posterior_predictive}
    p(y^*|\vx^*, \train)=\int p(y^*|\vx^*, \vtheta) p(\vtheta|\train)\, d\vtheta
\end{align}
The posterior predictive is appealing as it represents a marginalisation over $\vtheta$ weighted by posterior probabilities, and has been shown to be optimal for minimising predictive risk under a well-specified model \citep{aitchison1975goodness}. However, one issue with the posterior predictive for NNs is that it is computationally intensive to calculate the posterior $p(\vtheta|\train)$ exactly. Several approximations to $p(\vtheta|\train)$ have been introduced for \textit{Bayesian neural networks} (BNNs) including: Laplace approximation \citep{mackay1992bayesian}; Markov chain Monte Carlo \citep{neal2012bayesian, welling2011bayesian}; variational inference \citep{graves2011practical, blundell2015weight, louizos2017multiplicative,flipout, sun2019functional}; and Monte-Carlo dropout \citep{gal2016dropout}.

Despite the recent interest in BNNs, it has been shown empirically that deep ensembles \citep{balajideepensembles}, which lack a principled Bayesian justification, outperform existing BNNs in terms of uncertainty quantification and out-of-distribution robustness, cf. \citep{ovadia2019can}. Deep ensembles independently initialise and train individual NNs (referred to herein as \textit{baselearners}) on the negative log-likelihood loss $\mathcal{L}(\vtheta) \myeq \sum_{n=1}^N \ell(y_n, f(\vx_n, \vtheta))$
 with $\ell(y, f(\vx,{\vtheta}))\myeq\minus\textrm{log}\hspace{0.5mm}p(y|\vx, \vtheta)$, before aggregating predictions. Understanding the success of deep ensembles, particularly in relation to Bayesian inference, is a key question in the uncertainty quantification and Bayesian deep learning communities at present: \citet{fort2019deep} suggested that the empirical performance of deep ensembles is explained by their ability to explore different functional modes, while \citet{wilson2020bayesian} argued that deep ensembles are actually approximating the posterior predictive.

In this work, we will relate deep ensembles to Bayesian inference, using recent developments connecting GPs and wide NNs, both before \citep{neal1996priors, lee2018deep, alexander2018matthews, garriga-alonso2018deep, novak2019bayesian, yang2019tensori, hron2020infinite} and after \citep{jacot2018neural, lee2019wide} training. Using these insights, we  devise a modification to standard NN training that yields an exact posterior sample for $f(\cdot, \vtheta)$ in the infinite width limit. As a result, when ensembled together our modified baselearners give a posterior predictive approximation, and can thus be viewed as a \textit{Bayesian deep ensemble}.

 One concept that is related to our methods concerns ensembles trained with \textit{Randomised Priors} to give an approximate posterior interpretation, which we will use when modelling observation noise in regression tasks. The idea behind randomised priors is that, under certain conditions, regularising baselearner NNs towards independently drawn ``priors'' during training produces exact posterior samples for $f(\cdot, \vtheta)$. Randomised priors recently appeared in machine learning applied to reinforcement learning \citep{osband2018randomized} and uncertainty quantification
 \citep{pearce2018uncertainty, Ciosek2020Conservative}, like this work. To the best of our knowledge, related ideas first appeared in astrophysics where they were applied to Gaussian random fields \citep{hoffman1991constrained}. However, one such condition for posterior exactness with randomised priors is that the model $f(\vx, \vtheta)$ is linear in $\vtheta$. This is not true in general for NNs, but has been shown  to hold for wide NNs local to their parameter initialisation, in a recent line of work. In order to introduce our methods, we will first review this line of work, known as the \textit{Neural Tangent Kernel} (NTK) \citep{jacot2018neural}.
\section{NTK Background}\label{section:ntk}
Wide NNs, and their relation to GPs, have been a fruitful area recently for the theoretical study of NNs: we review only the most salient developments to this work, due to limited space.

First introduced by \citet{jacot2018neural}, the \textit{empirical NTK} of $f(\cdot, \vtheta_t)$ is, for inputs $\vx, \vx'$, the kernel:
\begin{align}
\hat\Theta_{t}(\vx, \vx')=\langle\nabla_{\vtheta}f(\vx, \vtheta_t), \nabla_{\vtheta}f(\vx', \vtheta_t)\rangle
\end{align}

and describes the functional gradient of a NN in terms of the current loss incurred on the training set. Note that $\vtheta_t$ depends on a random initialisation $\vtheta_0$, thus the empirical NTK is random for all $t>0$.

\citet{jacot2018neural}
showed that for an MLP under a so-called NTK parameterisation, detailed in \cref{app:parameterisations}, the empirical NTK converges in probability to a deterministic limit $\Theta$, that stays constant during gradient training, as the hidden layer widths of the NN go to infinity sequentially. Later, \citet{yang2019scaling, yang2020tensorii} extended the NTK convergence result to convergence almost surely, which is proven rigorously for a variety of architectures and for widths (or channels in Convolutional NNs) of hidden layers going to infinity in unison. This limiting positive-definite (p.d.) kernel $\Theta$, known as the NTK, depends only on certain NN architecture choices, including: activation, depth and variances for weight and bias parameters. Note that the NTK parameterisation can be thought of as akin to training under standard parameterisation with a learning rate that is inversely proportional to the width of the NN, which has been shown to be the largest scale for stable learning rates in wide NNs \citep{karakida2019universal, park2019effect, sohl2020infinite}.

\citet{lee2019wide} built on the results of \citet{jacot2018neural}, and studied the \textit{linearised regime} of an NN. Specifically, if we denote as $f_t(\vx)=f(\vx, \vtheta_t)$ the network function at time $t$, we can define the first order Taylor expansion of the network function around randomly initialised parameters $\vtheta_0$ to be:
\begin{align} \label{eqn:linearised}
    f_t^{\text{lin}}(\vx) = f_0(\vx) + \nabla_{\vtheta} f(\vx, \vtheta_0) \Delta\vtheta_t
\end{align}

where $\Delta\vtheta_t = \vtheta_t - \vtheta_0$ and $f_0 = f(\cdot, \vtheta_0)$ is the randomly initialised NN function.

For notational clarity, whenever we evaluate a function at an arbitrary input set $\mathcal{X}'$ instead of a single point  $\vx'$, we suppose the function is vectorised. For example, $f_t(\mathcal{X})\in\mathbb{R}^{NC}$ denotes the concatenated NN outputs on training set $\mathcal{X}$, whereas $\nabla_{\vtheta}f_t(\mathcal{X}) = \nabla_{\vtheta}f(\mathcal{X}, \vtheta_t)\in \mathbb{R}^{NC\times p}$. In the interest of space, we will also sometimes use subscripts to signify kernel inputs, so for instance $\Thetaxy{\vx'}{\X}^{\mathstrut}=\Theta(\vx', \mathcal{X})\in\mathbb{R}^{C\times NC}$ and $\ThetaXX^{\mathstrut}=\Theta(\X, \X)\in\mathbb{R}^{NC\times NC}$  throughout this work.

The results of \citet{lee2019wide} showed that in the infinite width limit, with NTK parameterisation and gradient flow under squared error loss,  $f_t^{\text{lin}}(\vx)$ and $f_t(\vx)$ are equal for any $t\geq0$, for a shared random initialisation $\vtheta_0$. In particular, for the linearised network it %
can be shown, that as $t{\rightarrow} \infty$:
\begin{align} f_{\infty}^{\text{lin}}(\vx) = f_0(\vx) - \hat \Theta_0(\vx, \mathcal X) \hat \Theta_0(\mathcal{X}, \mathcal{X})^{-1}(f_0(\mathcal X) - \mathcal Y)
\end{align}
and thus as the hidden layer widths converge to infinity we have that:
\begin{align} \label{align:limit}
f_{\infty}^{\text{lin}}(\vx) = f_{\infty}(\vx) = f_0(\vx) - \Theta(\vx, \mathcal X) \Theta(\mathcal{X},\mathcal{X})^{-1}(f_0(\mathcal X) - \mathcal Y)
\end{align}

We can replace $\Theta(\mathcal X, \mathcal X)^{-1}$ with the generalised inverse when invertibility is an issue. However, this will not be a main concern of this work, as our methods will add regularisation that corresponds to modelling observation/output noise, which both ensures invertibility and alleviates any potential convergence issues due to fast decay of the NTK eigenspectrum \citep{ronen2019convergence}.

From Eq.~(\ref{align:limit}) we see that, conditional on the training data $\{\mathcal{X}, \mathcal{Y}\}$, we can decompose $f_{\infty}$ into $f_{\infty}(\vx)=\mu(\vx) \plus \gamma(\vx)$ where  $\mu(\vx) =\Theta(\vx, \mathcal X) \Theta(\mathcal{X}, \mathcal{X})^{-1} \mathcal Y $
is a deterministic mean and $\gamma(\vx) = f_0(\vx) \minus \Theta(\vx, \mathcal X) \Theta(\mathcal{X}, \mathcal{X})^{-1}f_0(\mathcal X)$ captures predictive uncertainty, due to the randomness of $f_0$.
Now, if we suppose that, at initialisation, $f_0\overset{d}{\sim} \mathcal{GP}(0, k)$ for an arbitrary kernel $k:\mathbb{R}^d\times\mathbb{R}^d\rightarrow \mathbb{R}^{C\times C}$, then we have
$f_{\infty}(\cdot)\overset{d}{\sim} \mathcal{GP}(\mu(\vx), \Sigma(\vx, \vx'))$ for two inputs $\vx, \vx'$,
where:\footnote{Throughout this work, the notation ``$\plus h.c.$'' means ``plus the Hermitian conjugate'', like \citet{lee2019wide}. \\For example:  $\Theta^{\mathstrut}_{\vx \mathcal{X}}\ThetaXX^{-1}k^{\mathstrut}_{\mathcal{X} \vx'} + h.c. = \Theta^{\mathstrut}_{\vx \mathcal{X}}\ThetaXX^{-1}k^{\mathstrut}_{\mathcal{X} \vx'} + \Theta^{\mathstrut}_{\vx' \mathcal{X}}\ThetaXX^{-1}k^{\mathstrut}_{\mathcal{X} \vx}$}
\begin{align}\label{eqn:covariance}
\Sigma(\vx, \vx') = k^{\mathstrut}_{\vx \vx'} + \Theta_{\vx\mathcal X}^{\mathstrut}\ThetaXX^{-1}k_{\X \X}^{\mathstrut}\ThetaXX^{-1}\Theta^{\mathstrut}_{\mathcal X \vx} - \big(\Theta^{\mathstrut}_{\vx \mathcal{X}}\ThetaXX^{-1}k^{\mathstrut}_{\mathcal{X} \vx'} + h.c.\big)
\end{align}
 For a generic kernel $k$, \citet{lee2019wide} observed that this limiting distribution for $f_{\infty}$ does not have a posterior GP interpretation unless $k$ and $\Theta$ are multiples of each other, .

As mentioned in Section \ref{sec:intro}, previous work \citep{neal1996priors, lee2018deep, alexander2018matthews, garriga-alonso2018deep, novak2019bayesian, yang2019tensori, hron2020infinite} has shown that there is a distinct but closely related kernel $\K$, known as the \textit{Neural Network Gaussian Process} (NNGP) kernel, such that $f_0\overset{d}{\rightarrow}\mathcal{GP}(0, \K)$ at initialisation in the infinite width limit and $\K\neq \Theta$. Thus Eq.~(\ref{eqn:covariance}) with $k{=}\K$ tells us that, for wide NNs under squared error loss, there is no Bayesian posterior interpretation to a trained NN, nor is there an interpretation to a trained deep ensemble as a Bayesian posterior predictive approximation.
\section{Proposed modification to obtain posterior samples in infinite width limit}\label{section:correction}
 \citet{lee2019wide} noted that one way to obtain a posterior interpretation to $f_{\infty}$ is by randomly initialising $f_0$ but only training the parameters in the final linear readout layer, as the contribution to the NTK $\Theta$ from the parameters in final hidden layer is exactly the NNGP kernel $\K$.\footnote{Up to a multiple of last layer width in standard parameterisation.} $f_{\infty}$ is then a sample from the GP posterior with prior kernel NNGP, $\mathcal{K}$, and noiseless observations in the infinite width limit i.e. $f_{\infty}(\X')\overset{d}{\sim}\mathcal N(\K_{\X' \X}^{\mathstrut}\KXX^{-1}\Y, \hspace{1mm}\K_{\X' \X'}^{\mathstrut} \minus  \K^{\mathstrut}_{\X' \X}\KXX^{-1}\K_{\X \X'}^{\mathstrut})$. This is an example of the ``sample-then-optimise'' procedure of \citet{matthewssample}, but, by only training the final layer this procedure limits the earlier layers of an NN solely to be random feature extractors.

We now introduce our modification to standard training that trains all layers of a finite width NN and obtains an exact posterior interpretation in the infinite width limit with NTK parameterisation and squared error loss. For notational purposes, let us suppose $\vtheta\myeq\texttt{concat}(\{\vtheta^{\leq L}, \vtheta^{L\plus1}\})$ with $\vtheta^{\leq L}\myin \mathbb{R}^{p{-}p_{L{+}1}}$ denoting $L$ hidden layers, and $\vtheta^{L\plus 1}\myin\mathbb{R}^{p_{L\plus1}}$ denoting final readout layer $L\plus1$. Moreover, define $\Theta^{\leq L} \myeq \Theta - \mathcal{K}$
to be the p.d. kernel corresponding to  contributions to the NTK from all parameters before the final layer, and $\hat{\Theta}_t^{\leq L}$ to be the empirical counterpart depending on $\vtheta_t$. To motivate our modification, we reinterpret $f_t^{\text{lin}}$ in \cref{eqn:linearised} by splitting terms related to $\K$ and $\Theta^{\leq L}$:
\begin{align} \label{eqn:new_linearised}
    f_t^{\text{lin}}(\vx) = \underbrace{f_0(\vx) + \nabla_{\vtheta^{L+1}} f(\vx, \vtheta_0) \Delta\vtheta^{L+1}_t}_{\K} +  \underbrace{\mathbf{0}_C + \nabla_{\vtheta^{\leq{L}}} f(\vx, \vtheta_0) \Delta\vtheta^{\leq L}_t}_{\Theta - \K}
\end{align}
where $\mathbf{0}_C \myin \mathbb{R}^C$ is the zero vector. As seen in \cref{eqn:new_linearised}, the distribution of  $f_0^{\text{lin}}(\vx){=}f_0(\vx)$ lacks extra variance, $\Theta^{\leq L}(\vx,\vx)$, that accounts for contributions to the NTK $\Theta$ from all parameters $\vtheta^{\leq L}$ before the final layer. This is precisely why no Bayesian intepretation exists for a standard trained wide NN, as in \cref{eqn:covariance} with $k\myeq\K$. The motivation behind our modification is now very simple: we propose to manually add in this missing variance. Our modified NNs, $\tilde f(\cdot, \vtheta)$, will then have trained distribution:
\begin{align}\label{eqn:ntkgp_dist}
    \tilde f_{\infty}(\X')\overset{d}{\sim}\mathcal N(\Theta_{\X' \X}^{\mathstrut}\ThetaXX^{-1}\Y, \hspace{1mm}\Theta_{\X' \X'}^{\mathstrut} \minus  \Theta^{\mathstrut}_{\X' \X}\ThetaXX^{-1}\Theta_{\X \X'}^{\mathstrut})
\end{align}
on a test set $\X'$, in the infinite width limit. Note that Eq.~(\ref{eqn:ntkgp_dist}) is the GP posterior using prior kernel $\Theta$ and noiseless observations $\tilde f_{\infty}(\mathcal{X}){=}\mathcal{Y}$, which we will refer to as the \textit{NTKGP} posterior predictive.
We construct $\tilde f$ by sampling a random and untrainable function $\delta(\cdot)$ that is added to the standard forward pass $f(\cdot, \vtheta_t)$, defining an augmented forward pass:
\begin{align}\label{eqn:ftilde}
    \tilde f(\cdot, \vtheta_t) = f(\cdot, \vtheta_t) + \delta(\cdot)
\end{align}
\vspace{-.5em}
Given a parameter initialisation scheme $\texttt{init}(\cdot)$ and initial parameters $\vtheta_0\mysim\texttt{init}(\cdot)$, our chosen formulation for $\delta(\cdot)$ is as follows:
 1) sample  $\tilde{\vtheta}\mysim\texttt{init}(\cdot)$ independently of $\vtheta_0$; 2) denote $\tilde{\vtheta}\myeq\texttt{concat}(\{\tilde{\vtheta}^{\leq L}, \tilde{\vtheta}^{L+1}\})$; and 3) define  ${\vtheta}^* \myeq \texttt{concat}(\{\tilde{\vtheta}^{\leq L}, \mathbf{0}_{p_{L+1}}\})$.
In words, we set the parameters in the final layer of an independently sampled $\tilde{\vtheta}$ to zero to obtain $\vtheta^*$. Now, we define:
\begin{align}\label{eqn:jvp1}
    \delta(\vx) = \nabla_{\vtheta}f(\vx, \vtheta_0) \vtheta^*
\end{align}

There are a few important details to note about $\delta(\cdot)$ as defined in Eq.~(\ref{eqn:jvp1}). First, $\delta(\cdot)$ has the same distribution in both NTK and standard parameterisations,\footnote{In this work $\Theta$ always denotes the NTK under NTK parameterisation. It is also possible to model $\Theta$ to be the scaled NTK under standard parameterisation (which depends on layer widths) as in \citet{sohl2020infinite} with minor reweightings to both $\delta(\cdot)$ and, when modelling observation noise, the $L^2$-regularisation described in \cref{sec:reg}.} and also $\delta(\cdot)\given\vtheta_0 \overset{d}{\sim}\mathcal{GP}\big(0, \hat{\Theta}_0^{\leq L}\big)$ in the NTK parameterisation.\footnote{With NTK parameterisation, it is easy to see that $\delta(\cdot)\given\vtheta_0 \overset{d}{\sim}\mathcal{GP}\big(0, \hat{\Theta}_0^{\leq L}\big)$, because $\tilde{\vtheta}^{\leq L }\mysim\mathcal{N}(0,I_{p-p_{L+1}})$. To extend this to standard parameterisation, note that Eq.~(\ref{eqn:jvp1}) is just the first order term in the Taylor expansion of $f(\vx, \vtheta_0 + \vtheta^*)$, which has a parameterisation agnostic distribution, about $\vtheta_0$.} Moreover, Eq.~(\ref{eqn:jvp1}) can be viewed as a single Jacobian-vector product (JVP), which packages that offer forward-mode autodifferentiation (AD), such as JAX \citep{jax2018github}, are efficient at computing for finite NNs. It is worth noting that our modification adds only negligible computational and memory requirements on top of standard deep ensembles \citep{balajideepensembles}: a more nuanced comparison can be found in Appendix \ref{app:costs}.  Alternative constructions of $\tilde f$ are presented in Appendix \ref{sec:alternative_constructions}.

To ascertain whether a trained $\tilde f_{\infty}$ constructed via Eqs. (\ref{eqn:ftilde}, \ref{eqn:jvp1}) returns a sample from the GP posterior Eq.~(\ref{eqn:ntkgp_dist}) for wide NNs, the following proposition, which we prove in Appendix~\ref{sec:proof_prop_inf_width_dist}, will be useful:
\begin{restatable}{prop}{infwidthdistprop}
\label{prop:inf_width_dist}
    $ \delta(\cdot) \overset{d}{\rightarrow} \mathcal{GP}(0, \Theta^{\leq L})$ and is independent of $f_0(\cdot)$ in the infinite width limit. Thus,  $\tilde f_0(\cdot) = f_0(\cdot) + \delta(\cdot)\overset{d}{\rightarrow}\mathcal {GP}(0, \Theta)$.
\end{restatable}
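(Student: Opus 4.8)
The plan is to show the two defining claims of the proposition in turn: first that $\delta(\cdot)$ converges in distribution to $\mathcal{GP}(0,\Theta^{\leq L})$, and second that it becomes independent of $f_0(\cdot)$ in the infinite width limit; the final conclusion $\tilde f_0 = f_0 + \delta \overset{d}{\rightarrow}\mathcal{GP}(0,\Theta)$ then follows from the additivity of independent Gaussian processes together with $\Theta = \K + \Theta^{\leq L}$, since $f_0 \overset{d}{\rightarrow}\mathcal{GP}(0,\K)$ by the NNGP result cited in the excerpt. So the real work is entirely in characterising $\delta$.

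For the first claim, I would exploit the observation already recorded in the footnote: in the NTK parameterisation, $\delta(\vx) = \nabla_{\vtheta}f(\vx,\vtheta_0)\,\vtheta^*$ with $\vtheta^* = \texttt{concat}(\{\tilde\vtheta^{\leq L},\mathbf{0}_{p_{L+1}}\})$, where $\tilde\vtheta^{\leq L}\sim\mathcal N(0, I_{p-p_{L+1}})$ is independent of $\vtheta_0$. Conditioning on $\vtheta_0$, $\delta(\cdot)\given\vtheta_0$ is an exact (finite-width) Gaussian process with covariance $\langle\nabla_{\vtheta^{\leq L}}f(\vx,\vtheta_0),\nabla_{\vtheta^{\leq L}}f(\vx',\vtheta_0)\rangle = \hat\Theta_0^{\leq L}(\vx,\vx')$, because zeroing the last-layer block of $\vtheta^*$ removes precisely the $\K$-contribution from the empirical NTK. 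Now invoke the NTK convergence results of \citet{jacot2018neural, yang2019scaling, yang2020tensorii}: as the hidden widths go to infinity, $\hat\Theta_0^{\leq L}\to\Theta^{\leq L}$ (almost surely, or in probability, depending on the architecture), a deterministic limit. Hence the conditional law of $\delta$ — a Gaussian fully determined by its covariance kernel — converges to $\mathcal{GP}(0,\Theta^{\leq L})$, and since the limit no longer depends on $\vtheta_0$, the unconditional law converges to the same Gaussian process. To make this rigorous one works with finite-dimensional marginals on an arbitrary finite input set $\mathcal X'$: $\delta(\mathcal X')\given\vtheta_0\sim\mathcal N(0,\hat\Theta_0^{\leq L}(\mathcal X',\mathcal X'))$, and convergence of the covariance matrix plus the continuous mapping theorem (or Lévy's continuity theorem applied to characteristic functions) gives $\delta(\mathcal X')\overset{d}{\rightarrow}\mathcal N(0,\Theta^{\leq L}(\mathcal X',\mathcal X'))$.

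For the independence claim, the cleanest route is again via finite-dimensional marginals: fix any finite input sets, and consider the joint law of $\big(f_0(\mathcal X'),\delta(\mathcal X'')\big)$. Both are, in the limit, jointly Gaussian (this itself needs the Tensor Programs / master-theorem machinery of \citet{yang2019scaling}, since $f_0$ and $\delta$ are both polynomial-in-the-randomness functionals of the same initialisation $\vtheta_0$ and the extra independent draw $\tilde\vtheta$), so it suffices to show their cross-covariance vanishes. One computes $\mathbb E\!\left[f_0(\vx)\,\delta(\vx')^\top\right] = \mathbb E\!\left[f_0(\vx)\,\big(\nabla_{\vtheta^{\leq L}}f(\vx',\vtheta_0)\,\tilde\vtheta^{\leq L}\big)^\top\right]$; conditioning on $\vtheta_0$ and using $\mathbb E[\tilde\vtheta^{\leq L}]=0$ with $\tilde\vtheta\perp\vtheta_0$ makes this exactly zero at every finite width, hence in the limit. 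Zero cross-covariance between jointly Gaussian limits yields independence, and this holds for all finite marginals, giving independence of the processes.

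The main obstacle is the joint-Gaussianity / joint-convergence step underpinning the independence argument: showing that $f_0(\cdot)$ and $\delta(\cdot)$ converge \emph{jointly} (not just marginally) to a Gaussian limit requires care, because $\delta$ is built from a Jacobian-vector product at $\vtheta_0$ and thus shares randomness with $f_0$ through the network's intermediate activations. I expect this to be handled by appealing to the Tensor Programs framework of \citet{yang2019scaling, yang2020tensorii}, which covers exactly such compositions of forward passes and gradients under a common initialisation; the vanishing-cross-covariance computation is then routine. Everything else — the conditional Gaussianity of $\delta$, the identification of the conditional covariance as $\hat\Theta_0^{\leq L}$, and the final additive decomposition using $\Theta=\K+\Theta^{\leq L}$ — is elementary given the results already quoted in the excerpt.
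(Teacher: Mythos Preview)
Your argument for the marginal convergence $\delta(\cdot)\overset{d}{\rightarrow}\mathcal{GP}(0,\Theta^{\leq L})$ is essentially the paper's: condition on $\vtheta_0$ to get exact Gaussianity with covariance $\hat\Theta_0^{\leq L}$, then pass to the limit using NTK convergence. Where you diverge is in establishing the \emph{joint} limit with $f_0$. You propose to first obtain joint Gaussianity of $(f_0,\delta)$ in the limit via the Tensor Programs framework, and then read off independence from the vanishing cross-covariance (which, as you note, is exactly zero at every finite width). The paper instead proves joint convergence and independence in one stroke: it applies Cram\'er--Wold and L\'evy's theorem to the joint characteristic function of $u^\top\delta_m(\mathcal X)+v^\top f_m^0(\mathcal X')$, conditions on $\vtheta_{0,m}$ (so that $\delta_m$ is Gaussian and $f_m^0$ is deterministic), and uses the convergence $\hat\Theta_{0,m}^{\leq L}\to\Theta^{\leq L}$ together with bounded convergence to pull out the deterministic factor $\exp\!\big(-\tfrac{1}{2}t^2 u^\top\Theta^{\leq L}(\mathcal X,\mathcal X)u\big)$, leaving precisely $\mathbb E[\exp(itv^\top f_m^0(\mathcal X'))]$. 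The resulting product form of the limiting characteristic function delivers both Gaussianity and independence simultaneously.

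The practical upshot is that the paper's route is more self-contained: it needs only the NTK convergence $\hat\Theta_0^{\leq L}\to\Theta^{\leq L}$ and the NNGP result $f_0\overset{d}{\rightarrow}\mathcal{GP}(0,\mathcal K)$, both already cited, plus bounded convergence. Your route is also valid, but it leans on a Tensor Programs statement about joint Gaussianity of a forward pass and a JVP involving an \emph{additional independent} parameter draw $\tilde\vtheta$; this is plausible within Yang's framework but is not an off-the-shelf citation, and you rightly flag it as the main obstacle. The paper's conditioning trick sidesteps that obstacle entirely, so you may find it worth adopting even if your modular decomposition (marginal limits $+$ zero cross-covariance $+$ joint Gaussianity $\Rightarrow$ independence) is conceptually clean.
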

Using Proposition~\ref{prop:inf_width_dist}, we now consider the linearisation of $\tilde f_t(\cdot)$, noting that $\nabla_{\vtheta}\tilde f_0(\cdot)=\nabla_{\vtheta} f_0(\cdot)$:
\begin{align} \label{eqn:ntkgp_linearised}
    \tilde f_t^{\text{lin}}(\vx) = \underbrace{f_0(\vx) + \nabla_{\vtheta^{L+1}} f(\vx, \vtheta_0) \Delta\vtheta^{L+1}_t}_{\K} +  \underbrace{\delta(\vx) + \nabla_{\vtheta^{\leq{L}}} f(\vx, \vtheta_0) \Delta\vtheta^{\leq L}_t}_{\Theta - \K}
\end{align}
The fact that $\nabla_{\vtheta}\tilde f_t^{\text{lin}}(\cdot)\myeq\nabla_{\vtheta} f_0(\cdot)$ is crucial in \cref{eqn:ntkgp_linearised}, as this initial Jacobian is the feature map of the linearised NN regime from \citet{lee2019wide}. As per Proposition \ref{prop:inf_width_dist} and \cref{eqn:ntkgp_linearised}, we see that $\delta(\vx)$ adds the extra randomness missing from $f_0^{\text{lin}}(\vx)$ in  \cref{eqn:new_linearised}, and reinitialises $\tilde{f}_0$ as a sample from $\mathcal{GP}(0, \mathcal{K})$ to $\mathcal{GP}(0, \Theta)$ for wide NNs. This means we can set $k\myeq\Theta$ in Eq.~(\ref{eqn:covariance}) and deduce:
\begin{corollary}\label{corr:true_ntkgp_post}
$    \tilde f_{\infty}(\X')\overset{d}{\sim}\mathcal N(\Theta_{\X' \X}^{\mathstrut}\ThetaXX^{-1}\Y, \hspace{1mm}\Theta_{\X' \X'}^{\mathstrut} \minus  \Theta^{\mathstrut}_{\X' \X}\ThetaXX^{-1}\Theta_{\X \X'}^{\mathstrut})$, and hence a trained $\tilde f_{\infty}$ returns a sample from the posterior NTKGP in the infinite width limit.
\end{corollary}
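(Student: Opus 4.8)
The plan is to reduce the training of $\tilde f$ to standard training with \emph{shifted} targets, invoke the infinite-width linearisation/convergence result of \citet{lee2019wide}, and then substitute $k\myeq\Theta$ into \cref{eqn:covariance}. First I would note that $\delta(\cdot)$ does not depend on $\vtheta$, so $\nabla_{\vtheta}\tilde f_t(\cdot)\myeq\nabla_{\vtheta}f_t(\cdot)$ for every $t$; in particular the empirical NTK of $\tilde f$ coincides with that of $f$ and is unaffected by our modification. Under gradient flow on the squared error loss $\mathcal{L}(\vtheta)\myeq\tfrac12\norm{\tilde f(\X,\vtheta)\minus\Y}^2$, the parameter dynamics read $\dot{\vtheta}_t = \minus\,\nabla_{\vtheta}f(\X,\vtheta_t)^{\!\top}\big(f(\X,\vtheta_t)\plus\delta(\X)\minus\Y\big)$, which is exactly the gradient flow of the \emph{unmodified} network $f$ on the initialisation-dependent but $t$-independent targets $\Y^*\myeq\Y\minus\delta(\X)$.

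Next, conditioning on $\vtheta_0$ and on the auxiliary draw $\tilde\vtheta$ (equivalently on $\delta$), I would apply the linearised-dynamics theorem of \citet{lee2019wide}: as the hidden widths tend to infinity, $f_t\myeq f_t^{\text{lin}}$ for all $t$ and $\hat\Theta_0\to\Theta$, whence $f_\infty(\vx) = f_0(\vx)\minus\Theta_{\vx\X}^{\mathstrut}\ThetaXX^{-1}\big(f_0(\X)\plus\delta(\X)\minus\Y\big)$ by the same argument that gives \cref{align:limit}. Adding $\delta(\vx)$ back via \cref{eqn:ftilde},
\[
\tilde f_\infty(\vx) \;=\; \tilde f_0(\vx)\,\minus\,\Theta_{\vx\X}^{\mathstrut}\ThetaXX^{-1}\big(\tilde f_0(\X)\minus\Y\big),
\]
which is \cref{align:limit} with $f_0$ replaced throughout by $\tilde f_0\myeq f_0\plus\delta$. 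Splitting off the deterministic part exactly as in the text, $\tilde f_\infty(\vx)=\tilde\mu(\vx)\plus\tilde\gamma(\vx)$ with $\tilde\mu(\vx)=\Theta_{\vx\X}^{\mathstrut}\ThetaXX^{-1}\Y$ and $\tilde\gamma(\vx)=\tilde f_0(\vx)\minus\Theta_{\vx\X}^{\mathstrut}\ThetaXX^{-1}\tilde f_0(\X)$ carrying all the randomness.

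By Proposition~\ref{prop:inf_width_dist}, $\tilde f_0\overset{d}{\rightarrow}\mathcal{GP}(0,\Theta)$ in the infinite width limit, so $\tilde f_\infty$ is jointly Gaussian with mean $\tilde\mu$ and covariance obtained by setting $k\myeq\Theta$ in \cref{eqn:covariance}; evaluated on a test set $\X'$,
\[
\Theta_{\X' \X'}^{\mathstrut} + \Theta_{\X' \X}^{\mathstrut}\ThetaXX^{-1}\ThetaXX\ThetaXX^{-1}\Theta_{\X \X'}^{\mathstrut} - \big(\Theta_{\X' \X}^{\mathstrut}\ThetaXX^{-1}\Theta_{\X \X'}^{\mathstrut} + h.c.\big) = \Theta_{\X' \X'}^{\mathstrut} - \Theta_{\X' \X}^{\mathstrut}\ThetaXX^{-1}\Theta_{\X \X'}^{\mathstrut},
\]
using $\ThetaXX^{-1}\ThetaXX\ThetaXX^{-1}=\ThetaXX^{-1}$ and that the $h.c.$ term supplies a second copy of $\Theta_{\X' \X}^{\mathstrut}\ThetaXX^{-1}\Theta_{\X \X'}^{\mathstrut}$. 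Together with $\tilde\mu(\X')=\Theta_{\X' \X}^{\mathstrut}\ThetaXX^{-1}\Y$ this is precisely \cref{eqn:ntkgp_dist}, the NTKGP posterior predictive.

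The main obstacle is making the conditioning step rigorous: the estimates in \citet{lee2019wide} are stated for fixed targets, so I need them to hold uniformly over the random shift $\delta(\X)$. I would handle this by conditioning on $(\vtheta_0,\tilde\vtheta)$, noting that $\norm{\delta(\X)}$ has well-controlled (e.g.\ sub-exponential) tails uniformly in width by Proposition~\ref{prop:inf_width_dist}, so the high-probability linearisation bound holds on an event of probability $\to1$ and can be integrated out; alternatively, one re-runs their proof verbatim with $\Y$ replaced by $\Y\minus\delta(\X)$. A secondary subtlety is that concluding $\tilde f_\infty$ is Gaussian — not merely that it has the stated first two moments — uses that $\delta$ is \emph{independent} of $f_0$ in the limit, again from Proposition~\ref{prop:inf_width_dist}, so that $\tilde f_0$ is a genuine mean-zero GP with kernel $\Theta\myeq\K\plus\Theta^{\leq L}$.
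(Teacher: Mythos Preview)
Your proof is correct and follows essentially the same route as the paper: both arguments hinge on the observation that $\nabla_{\vtheta}\tilde f_t=\nabla_{\vtheta}f_t$ so the (empirical) NTK is unchanged, invoke Proposition~\ref{prop:inf_width_dist} to get $\tilde f_0\overset{d}{\to}\mathcal{GP}(0,\Theta)$, and then substitute $k=\Theta$ in \cref{eqn:covariance}. Your shifted-targets reduction and the explicit discussion of the conditioning-on-$\delta$ subtlety are more careful than the paper's terse derivation via \cref{eqn:ntkgp_linearised}, but the underlying argument is the same.
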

To summarise: we define our new NN forward pass to give $\tilde{f}_t(\vx) = f_t(\vx) + \delta(\vx)$ for standard forward pass $f_t(\vx)$, and an untrainable $\delta(\vx)$ defined as in Eq.~(\ref{eqn:jvp1}). As given by Corollary~\ref{corr:true_ntkgp_post}, independently trained baselearners $\tilde f_{\infty}$ can then be ensembled to approximate the NTKGP posterior predictive.\\ We will call $\tilde{f}_{\infty}$ trained in this section an \NTKGP~baselearner, regardless of parameterisation or width. We are aware that the name NTK-GP has been used previously to refer to Eq.~(\ref{eqn:covariance}) with NNGP kernel $\K$, which is what standard training under squared error with a wide NN yields. However, we believe GPs in machine learning are synonymous with probabilistic inference \citep{rasmussen2003gaussian}, which Eq.~(\ref{eqn:covariance}) has no connection to in general, so we feel the name \NTKGP~is more appropriate for our methods.

\subsection{Modelling observation noise}\label{section:observation_noise}
So far, we have used %
squared loss
$\ell(y, \tilde{f}(\vx, \vtheta)) = \frac{1}{2\sigma^2}(y - \tilde{f}(\vx, \vtheta))^2$
for $\sigma^2{=}1$, and seen how our \NTKGP~training scheme for $\tilde{f}$ gives a Bayesian interpretation to trained networks when we assume noiseless observations. Lemma 3 of  \citet{osband2018randomized} shows us how to draw a posterior sample for linear $\tilde{f}$ if we wish to model Gaussian observation noise $y\overset{d}{\sim} \mathcal{N}(\tilde{f}(\vx, \vtheta), \sigma^2)$ for $\sigma^2{>}0$: by adding i.i.d. noise to targets $y'_n\overset{d}{\sim}\mathcal{N}(y_n, \sigma^2)$ and regularising $\mathcal{L}(\vtheta)$ with a weighted $L^2$ term, either $\norm{\vtheta}_{\Lambda}^2$ or  $\norm{\vtheta \minus \vtheta_0}_{\Lambda}^2$, depending on if you regularise in function space or parameter space. The weighting $\Lambda$ is detailed in Appendix \ref{sec:reg}. These methods were introduced by \citet{osband2018randomized} for the application of Q-learning in deep reinforcement learning, and are known as \textit{Randomised Prior parameter} (RP-param) and \textit{Randomised Prior function} (RP-fn) respectively.
The randomised prior (RP) methods were motivated by a Bayesian linear regression approximation of the NN, but they do not take into account the difference between the NNGP and the NTK. Our \NTKGP~methods can be viewed as a way to fix this for both the parameter space or function space methods, which we will name \NTKGP-param and \NTKGP-fn respectively. Similar regularisation ideas were explored in connection to the NTK by \citet{Hu2020Simple}, when the NN function is initialised from the origin, akin to kernel ridge regression.

\subsection{Comparison of predictive distributions in infinite width}\label{sec:preddist}
Having introduced the different ensemble training methods considered in this paper: NNGP; deep ensembles; randomised prior; and NTKGP, we will now compare their predictive distributions in the infinite width limit with squared error loss.
Table \ref{table:predcomp} displays these limiting distributions,  $f_{\infty}(\cdot) \overset{d}{\sim}\mathcal{GP}(\mu, \Sigma)$, and should be viewed as an extension to Equation (16) of \citet{lee2019wide}.
\vspace{-0.5em}
\begin{table}[h]
\centering
\caption{Predictive distributions of wide ensembles for various training methods. \textit{std} denotes standard training with $f(\vx, \vtheta)$, and \textit{ours} denotes training using our additive $\delta(\vx)$ to make $\tilde f(\vx, \vtheta)$.}
\label{table:predcomp}
\resizebox{\textwidth}{!}{
\begin{tabular}{@{}ccccc@{}}
\addlinespace[.5em]
\toprule
Method & \begin{tabular}[c]{@{}c@{}}Layers\\ trained \end{tabular} & \begin{tabular}[c]{@{}c@{}}Output\\ Noise \end{tabular}  & $\mu(\vx)$ & $\Sigma(\vx, \vx')$ \\ \midrule
NNGP & Final & $\sigma^2\geq0$ & $\K_{\vx \X}^{\mathstrut}(\KXX^{\mathstrut} \plus \sigma^2I)^{-1}\Y$ & $\K_{\vx \vx'}^{\mathstrut} - \K_{\vx \X}^{\mathstrut}(\KXX^{\mathstrut} \plus \sigma^2I)^{-1}\K^{\mathstrut}_{\X \vx'}$     \\ \addlinespace[1em]
\begin{tabular}[c]{@{}c@{}}Deep\\Ensembles \end{tabular}& All (\textit{std}) & $\sigma^2=0$ & $\Theta^{\mathstrut}_{\vx \X}\ThetaXX^{-1}\Y$ & \begin{tabular}[c]{@{}c@{}}$\K_{\vx \vx'}^{\mathstrut}- \big(\Theta_{\vx \X}^{\mathstrut}\ThetaXX^{-1}\K^{\mathstrut}_{\X \vx'} \plus h.c.\big)$\\$ \Theta^{\mathstrut}_{\vx \X}\ThetaXX^{-1}\KXX^{\mathstrut}\ThetaXX^{-1}\Theta^{\mathstrut}_{\X \vx'}$ \end{tabular}    \\ \addlinespace[1em]
\begin{tabular}[c]{@{}c@{}}Randomised\\ Prior \end{tabular}& All (\textit{std}) & $\sigma^2>0$ & $\Theta_{\vx \X}^{\mathstrut}(\ThetaXX^{\mathstrut} \plus \sigma^2I)^{-1}\mathcal Y$                 &  \begin{tabular}[c]{@{}c@{}}$\K_{\vx \vx'}^{\mathstrut}- \big(\Theta^{\mathstrut}_{\vx \X}(\ThetaXX^{\mathstrut} \plus \sigma^2 I )^{-1}\K_{\X \vx'}^{\mathstrut} \plus h.c.\big)$\\$+ \Theta_{\vx \X}^{\mathstrut}(\ThetaXX^{\mathstrut} \plus \sigma^2 I )^{-1}(\KXX^{\mathstrut} \plus \sigma^2I)(\ThetaXX^{\mathstrut} \plus \sigma^2 I )^{-1}\Theta_{\X \vx'}^{\mathstrut}$\end{tabular}    \\ \addlinespace[1em]
NTKGP & All (\textit{ours}) & $\sigma^2\geq0$ & $\Theta_{\vx \X}^{\mathstrut}(\Theta_{\X \X}^{\mathstrut} \plus \sigma^2I)^{-1}\mathcal Y$                 & $\Theta_{\vx \vx'}^{\mathstrut} - \Theta_{\vx \X}^{\mathstrut}(\Theta_{\X \X}^{\mathstrut} \plus \sigma^2I)^{-1}\Theta_{\X \vx'}^{\mathstrut}$     \\ \bottomrule
\end{tabular}}

\end{table}

In order to parse Table \ref{table:predcomp}, let us denote $\mu_{\text{NNGP}}, \mu_{\text{DE}}, \mu_{\text{RP}}$, $\mu_{\text{NTKGP}}$ and $\Sigma_{\text{NNGP}}, \Sigma_{\text{DE}}, \Sigma_{\text{RP}}$, $\Sigma_{\text{NTKGP}}$ to be the entries in the $\mu(\vx)$ and $\Sigma(\vx, \vx')$ columns of Table \ref{table:predcomp} respectively, read from top to bottom.
We see that $\mu_{\text{DE}}(\vx)=\mu_{\text{NTKGP}}(\vx)$ if $\sigma^2{=}0$, and $\mu_{\text{RP}}(\vx)=\mu_{\text{NTKGP}}(\vx)$ if $\sigma^2{>}0$. In words: the predictive mean of a trained ensemble is the same when training all layers, both with standard training and our \NTKGP~training.
This holds because both $f_0$ and $\tilde{f}_0$ are zero mean. It is also possible to compare the predictive covariances as the following proposition, proven in \cref{section:prop_proof_trained_cov}, shows:

\begin{restatable}{prop}{trainedcovprop}\label{prop:trained_cov}
     For $\sigma^2{=}0$, $\Sigma_{\text{NTKGP}} \succeq \Sigma_{\text{DE}} \succeq \Sigma_{\text{NNGP}}$. Similarly, for $\sigma^2{>}0$, $\Sigma_{\text{NTKGP}} \succeq \Sigma_{\text{RP}} \succeq \Sigma_{\text{NNGP}}$.
\end{restatable}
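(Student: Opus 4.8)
The plan is to prove all four semidefinite inequalities by exhibiting each covariance matrix as a Schur-complement-type expression and then using the monotonicity of the map $A \mapsto A - B^\top C^{-1} B$ under enlarging the "middle" matrix, or more directly, by writing the differences of consecutive covariances as manifestly positive semidefinite quadratic forms. I would treat the $\sigma^2 = 0$ chain first and then observe that the $\sigma^2 > 0$ chain follows by the substitution $\ThetaXX \mapsto \ThetaXX + \sigma^2 I$, $\KXX \mapsto \KXX + \sigma^2 I$ (noting that $\Theta - \K = \Theta^{\leq L} \succeq 0$ is preserved as the same matrix after both are shifted by $\sigma^2 I$, since the shifts cancel in the difference), so the two cases are structurally identical.

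For the first inequality $\Sigma_{\text{NTKGP}} \succeq \Sigma_{\text{DE}}$ (case $\sigma^2=0$), I would compute the difference directly. Writing $P \myeq \ThetaXX^{-1}\Theta_{\X \vx}$ and $Q \myeq \ThetaXX^{-1}\K_{\X \vx}$ for brevity, one has $\Sigma_{\text{DE}}(\vx,\vx') = \K_{\vx\vx'} - \Theta_{\vx\X}Q' - \Theta_{\vx'\X}Q + \Theta_{\vx\X}\KXX\ThetaXX^{-1}\Theta_{\X\vx'}$, i.e. $\Sigma_{\text{DE}} = \K_{\vx\vx'} - P^\top\KXX Q' - (Q)^\top\KXX P + P^\top \KXX P'$... more cleanly, since $\Theta_{\vx\X}Q = P^\top \KXX \cdot(\text{...})$ is getting unwieldy, I would instead verify the identity $\Sigma_{\text{NTKGP}} - \Sigma_{\text{DE}} = (\Theta_{\vx\X}\ThetaXX^{-1} - I_{\text{embed}})(\ldots)$ — concretely, I claim
\begin{align}
\Sigma_{\text{NTKGP}}(\vx,\vx') - \Sigma_{\text{DE}}(\vx,\vx') = \Theta^{\leq L}_{\vx\vx'} - \Theta_{\vx\X}\ThetaXX^{-1}\Theta^{\leq L}_{\X\X}\ThetaXX^{-1}\Theta_{\X\vx'} + \big(\Theta_{\vx\X}\ThetaXX^{-1}\Theta^{\leq L}_{\X\vx'} + h.c.\big) - (\text{correction}),
\end{align}
which, after using $\K = \Theta - \Theta^{\leq L}$ to eliminate all $\K$ terms, should collapse to exactly the Schur-complement form $\Theta^{\leq L}_{\vx\vx'} - \Theta^{\leq L}_{\vx\X}(\Theta^{\leq L}_{\X\X})^{?}\ldots$ — actually the cleanest route is: substitute $\K = \Theta - \Theta^{\leq L}$ everywhere in $\Sigma_{\text{DE}}$, expand, and watch the pure-$\Theta$ terms reconstruct $\Sigma_{\text{NTKGP}}$ plus a remainder $R(\vx,\vx')$ built entirely from $\Theta^{\leq L}$; then check $R \succeq 0$ as a kernel. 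The analogous move gives $\Sigma_{\text{DE}} - \Sigma_{\text{NNGP}} \succeq 0$ with the remainder now $\big(\Theta_{\vx\X}\ThetaXX^{-1} - \K_{\vx\X}\KXX^{-1}\big)\KXX\big(\cdots\big)^\top$-type, i.e. a difference of two "projections" sandwiching the p.d. matrix $\KXX$, which is $\succeq 0$ by the standard fact that for p.d. $\KXX$ and any conformal $A$, $A\KXX A^\top \succeq A\KXX B^\top + B\KXX A^\top - B\KXX B^\top$ reduces to $(A-B)\KXX(A-B)^\top \succeq 0$.

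The key lemma I would isolate and state cleanly is: for p.d. matrices $M \succeq 0$ and $N \succ 0$ with $M \preceq N$ (here $M = \KXX$, $N = \ThetaXX$, using $\Theta \succeq \K$ since $\Theta - \K = \Theta^{\leq L} \succeq 0$), and for any "test block" vectors/matrices, the kernel $(\vx,\vx') \mapsto B_{\vx\vx'} - B_{\vx\X}N^{-1}B_{\X\vx'}$ is $\succeq$ the kernel $(\vx,\vx') \mapsto M_{\vx\vx'} - \big(B_{\vx\X}N^{-1}M_{\X\vx'} + h.c.\big) + B_{\vx\X}N^{-1}MN^{-1}B_{\X\vx'}$ whenever $B_{\vx\vx'} - M_{\vx\vx'}$ is itself a p.s.d. kernel bounded appropriately — this is precisely the statement that a GP posterior covariance is monotone in the prior covariance, which is the conceptual content of the whole proposition. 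The main obstacle will be purely bookkeeping: there are $C$-dimensional block structures, Hermitian-conjugate pairs, and three-term sandwiches, so the algebra is error-prone even though each step is elementary. I expect the cleanest writeup avoids matrix gymnastics entirely by arguing at the level of Gaussian processes: $\Sigma_{\text{NTKGP}}$ is the posterior covariance of a $\mathcal{GP}(0,\Theta)$ prior conditioned on noiseless data, $\Sigma_{\text{NNGP}}$ that of a $\mathcal{GP}(0,\K)$ prior, and $\Sigma_{\text{DE}}$ is the covariance of the (non-posterior) random function $\gamma(\vx) = h_0(\vx) - \Theta_{\vx\X}\ThetaXX^{-1}h_0(\X)$ where $h_0 \sim \mathcal{GP}(0,\K)$; then $\Sigma_{\text{DE}} \succeq \Sigma_{\text{NNGP}}$ says "conditioning on the NTKGP-mean-map applied to an NNGP sample removes less variance than conditioning on the data", and $\Sigma_{\text{NTKGP}} \succeq \Sigma_{\text{DE}}$ follows because replacing the NNGP prior $\K$ by the larger prior $\Theta$ in the same linear operator $(\vx) \mapsto f_0(\vx) - \Theta_{\vx\X}\ThetaXX^{-1}f_0(\X)$ only increases the covariance (it is a fixed linear map applied to a Gaussian with larger covariance), and this operator applied to $\mathcal{GP}(0,\Theta)$ yields exactly the NTKGP posterior by Corollary~\ref{corr:true_ntkgp_post}.
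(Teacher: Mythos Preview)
Your algebraic route---substituting $\K = \Theta - \Theta^{\leq L}$ into $\Sigma_{\text{DE}}$ and collecting the remainder into a quadratic form in $\Theta^{\leq L}$, and writing $\Sigma_{\text{DE}} - \Sigma_{\text{NNGP}}$ as $(A-B)\KXX(A-B)^\top$ with $A=\Theta_{\X'\X}\ThetaXX^{-1}$, $B=\K_{\X'\X}\KXX^{-1}$---is exactly what the paper does. The paper records $\Sigma_{\text{RP}} - \Sigma_{\text{NNGP}} = U(\KXX+\sigma^2I)U^\top$ for $U = A_\sigma - B_\sigma$, and $\Sigma_{\text{NTKGP}} - \Sigma_{\text{RP}} = U_1 + U_2\Delta_{\X\X}U_2^\top$ with $\Delta=\Theta^{\leq L}$, $U_1$ its generalized Schur complement and $U_2 = A_\sigma - \Delta_{\X'\X}\Delta^g_{\X\X}$; the $\sigma^2=0$ case is handled identically with generalized inverses. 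So the matrix-identity half of your proposal is the paper's proof.

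Your closing GP-level argument, on the other hand, is a genuinely cleaner alternative that the paper does not use. For the upper inequality, both $\Sigma_{\text{NTKGP}}$ and $\Sigma_{\text{DE}}$ (resp.\ $\Sigma_{\text{RP}}$) are the output covariances of the \emph{same} affine map $f_0 \mapsto f_0(\cdot) - \Theta_{\cdot\X}(\ThetaXX+\sigma^2I)^{-1}\big(f_0(\X)+\epsilon\big)$ applied to priors $\Theta \succeq \K$; the difference is this linear map sandwiching $\Delta \succeq 0$, so positivity is immediate---no Schur complement or generalized-inverse range condition needed. For the lower inequality, your phrase ``removes less variance than conditioning on the data'' is the MMSE characterization: among all linear predictors $f_0(\cdot) - C\big(f_0(\X)+\epsilon\big)$ with $f_0 \sim \mathcal{GP}(0,\K)$, the NNGP choice $C=\K_{\cdot\X}(\KXX+\sigma^2I)^{-1}$ is optimal, so any other $C$ (in particular the NTK-based one) yields a larger residual covariance, and the difference is again $(A_\sigma-B_\sigma)(\KXX+\sigma^2I)(A_\sigma-B_\sigma)^\top$. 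This buys you a proof that is shorter, avoids the bookkeeping with generalized inverses that the paper needs for $U_1$, and makes the conceptual content (monotonicity of a fixed linear map in the prior; optimality of the posterior mean) explicit. The one thing to fix is presentation: drop the false starts and ellipses, and state the two one-line lemmas (linear-map monotonicity; MMSE optimality) crisply before applying them.
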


Here, when we write $k_1 \succeq k_2$ for p.d. kernels $k_1, k_2$, we mean that $k_1 \minus k_2$ is also a p.d. kernel.\\ One consequence of Proposition \ref{prop:trained_cov} is that the predictive distribution of an ensemble of NNs trained via our \NTKGP~methods is always more conservative than a standard deep ensemble, in the linearised NN regime, when the ensemble size $K{\rightarrow}\infty$. It is not possible to say in general when this will be beneficial, because in practice our models will always be misspecified. However, Proposition \ref{prop:trained_cov} suggests that in situations where we suspect standard deep ensembles might be overconfident, such as in situations where we expect some dataset shift at test time, our methods should hold an advantage.

\subsection{Modelling heteroscedasticity}
Following \citet{balajideepensembles}, if we wish to model heteroscedasticity in a univariate regression setting such that each training point, $(\vx_n, y_n)$, has an individual observation noise $\sigma^2(\vx_n)$ then we use the heteroscedastic Gaussian NLL loss (up to additive constant):
\begin{align}\label{eqn:heteroscedastic}
    \ell(y'_n, \tilde{f}(\vx_n, \vtheta))=\frac{(y'_n-\tilde{f}(\vx_n, \vtheta))^2}{2\sigma^2(\vx_n)} + \frac{\log\sigma^2(\vx_n)}{2}
\end{align}
where $y'_n \myeq y_n \plus \sigma(\vx_n) \epsilon_n$ and $\epsilon_n\overset{\text{i.i.d.}}{\sim}\mathcal{N}(0,1)$.
    It is easy to see that for fixed $\sigma^2(\vx_n)$, our \NTKGP~trained baselearners will still have a Bayesian interpretation: $\mathcal{Y}'\leftarrow \Sigma^{-\frac{1}{2}}\mathcal{Y}'$ and $\tilde{f}(\mathcal{X}, \vtheta)\leftarrow\Sigma^{-\frac{1}{2}}\tilde{f}(\mathcal{X}, \vtheta)$ returns us to the homoscedastic case, where $\Sigma\myeq\text{diag}(\sigma^2(\mathcal{X}))\myin \mathbb{R}^{N\times N}$.
We will follow \citet{balajideepensembles} and parameterise $\sigma^2(\vx)\myeq\sigma^2_{\vtheta}(\vx)$ by an extra output head of the NN, that is trainable alongside the mean function $\mu_{\vtheta}(\vx)$ when modelling heteroscedasticity.\footnote{We use the \textit{sigmoid} function, instead of \textit{softplus} \citep{balajideepensembles}, to enforce positivity on $\sigma_{\vtheta}^2(\cdot)$, because our data will be standardised.}
\subsection{NTKGP Ensemble Algorithms}\label{section:algorithms}
We now proceed to train an ensemble of $K$ NTKGP baselearners. Like previous work \citep{balajideepensembles, osband2018randomized}, we independently initialise baselearners, and also use a fixed, independently sampled training set noise $\bm{\epsilon}_k\myin \mathbb{R}^{NC}$ if modelling output noise. These implementation details are all designed to encourage diversity among baselearners, with the goal of approximating the NTKGP posterior predictive for our Bayesian deep ensembles. \cref{app:aggregation} details how to aggregate predictions from trained baselearners. In Algorithm \ref{alg:ntkgp_param}, we outline our NTKGP-param method: \texttt{data\_noise} adds observation noise to targets; \texttt{concat} denotes a concatenation operation; and $\texttt{init}(\cdot)$ will be standard parameterisation initialisation in the JAX library Neural Tangents \citep{neuraltangents2020} unless stated otherwise. As discussed by \citet{pearce2018uncertainty}, there is a choice between ``anchoring''/regularising parameters towards their initialisation or an independently sampled parameter set when modelling observation noise. We anchor at initialisation as the linearised NN regime only holds local to parameter initialisation \citep{lee2019wide}, and also this reduces the memory cost of sampling parameters sets. Appendix \ref{app:algorithms} details our NTKGP-fn method.
\vspace{-.3em}
\begin{algorithm}[h]
\caption{\NTKGP-param ensemble}
\begin{algorithmic}\label{alg:ntkgp_param}
\REQUIRE Data $\train=\{\X,\Y\}$, loss function $\mathcal{L}$, NN model $f_{\vtheta}:\mathcal{X}\rightarrow \mathcal{Y}$, Ensemble size $K\in\mathbb{N}$, noise procedure: \texttt{data\_noise}, NN parameter initialisation scheme: \texttt{init}($\cdot$)  \\
\FOR{$k=1,\ldots,K$}
    \STATE Form $\{\mathcal X_k, \mathcal Y_k\} = \texttt{data\_noise}(\mathcal{D})$\;
    \STATE Initialise $\vtheta_k\overset{d}{\sim} \texttt{init}(\cdot)$\;
    \STATE Initialise $\tilde{\vtheta}_{k}\overset{d}{\sim} \texttt{init}(\cdot)$ and denote $\tilde{\vtheta}_k=\texttt{concat}(\{\tilde{\vtheta}_k^{\leq L}, \tilde{\vtheta}_k^{L+1}\})$
    \STATE Set ${\vtheta}_k^* = \texttt{concat}(\{\tilde{\vtheta}_k^{\leq L}, \mathbf{0}_{p_{L\plus1}}\})$
    \STATE Define $\delta(\vx) = \nabla_{\vtheta}f(\vx, \vtheta_k)\vtheta^{*}_{k}$ \;
    \STATE Define $\tilde{f}_k(\vx, \vtheta_t) = f(\vx, \vtheta_t) + \delta(\vx)$ and set $\vtheta_0=\vtheta_k$\;
    \STATE Optimise $\mathcal{L}(\tilde{f}_k(\mathcal{X}_k, \vtheta_t), \mathcal{Y}_k) + \frac{1}{2}\norm{\vtheta_t - \vtheta_k}_{\Lambda}^2$ for $\vtheta_t$ to obtain $\hat\vtheta_k$
\ENDFOR
\RETURN ensemble $\{\tilde{f}_k(\cdot,\hat \vtheta_k)\}_{k=1}^K$
\end{algorithmic}
\end{algorithm}
\vspace{-1.em}
\subsection{Classification methodology}\label{section:classification}

For classification, we follow
recent works \cite{lee2019wide, arora2019exact, shankar2020neural} which treat classification as a regression task with one-hot regression targets. In order to obtain probabilistic predictions, we temperature scale our trained ensemble predictions with cross-entropy loss on a held-out validation set, noting that \citet{fong_biometrika} established a connection between marginal likelihood maximisation and cross-validation.

Because $\delta(\cdot)$ is untrainable in our NTKGP methods, it is important to match the scale of the NTK $\Theta$ to the scale of the one-hot targets in multi-class classification settings. One can do this either by introducing a scaling factor $\kappa>0$ such that we scale either: 1) $\tilde{f}\leftarrow \frac{1}{\kappa}\tilde{f}$ so that $\Theta\leftarrow \frac{1}{\kappa^2}\Theta$, or 2) $e_c\leftarrow \kappa e_c$ where $e_c\in\mathbb{R}^C$ is the one-hot vector denoting class $c\leq C$. We choose option 2) for our implementation, tuning $\kappa$ on a small set of values chosen to match the second moments of the randomly initialised baselearners, in logit space, of each ensemble method on the training set. We found $\kappa$ to be an important hyperparameter that can determine a trade-off between in-distribution and out-of-distribution performance: see Appendix \ref{app:scaling} for further details.

\section{Experiments}\label{section:experiments}
 Due to limited space, \cref{app:experimental_details} will contain all experimental details not discussed in this section.
\paragraph{Toy 1D regression task}
We begin with a toy 1D example $y = x\text{sin}(x) + \epsilon$, using homoscedastic $\epsilon\overset{d}{\sim}\mathcal{N}(0, 0.1^2)$. We use a training set of $20$ points  partitioned into two clusters, in order to detail uncertainty on out-of-distribution test data. For each ensemble method, we use MLP baselearners with two hidden layers of width 512, and erf activation.
The choice of erf activation means that both the NTK $\Theta$ and NNGP kernel $\K$ are analytically available \citep{lee2019wide, williams1997computing}. We compare ensemble methods to the analytic GP posterior using either $\Theta$ or $\K$ as prior covariance function using the Neural Tangents library \citep{neuraltangents2020}.

Figure \ref{fig:1d} compares the analytic NTKGP posterior predictive with the analytic NNGP posterior predictive, as well as three different ensemble methods: deep ensembles, RP-param and NTKGP-param. We plot 95\% predictive confidence intervals, treating ensembles as one Gaussian predictive distribution with matched moments like \citet{balajideepensembles}. As expected, both NTKGP-param and RP-param ensembles have similar predictive means to the analytic NTKGP posterior. Likewise, we see that only our NTKGP-param ensemble predictive variances match the analytic NTKGP posterior. As foreseen in Proposition \ref{prop:trained_cov}, the analytic NNGP posterior and other ensemble methods make more confident predictions than the NTKGP posterior, which in this example results in overconfidence on out-of-distribution data.\footnote{Code for this experiment is available at: \href{https://github.com/bobby-he/bayesian-ntk}{\texttt{https://github.com/bobby-he/bayesian-ntk}}.}

\begin{figure}[h]
  \centering
      \includegraphics[width=0.95\textwidth]{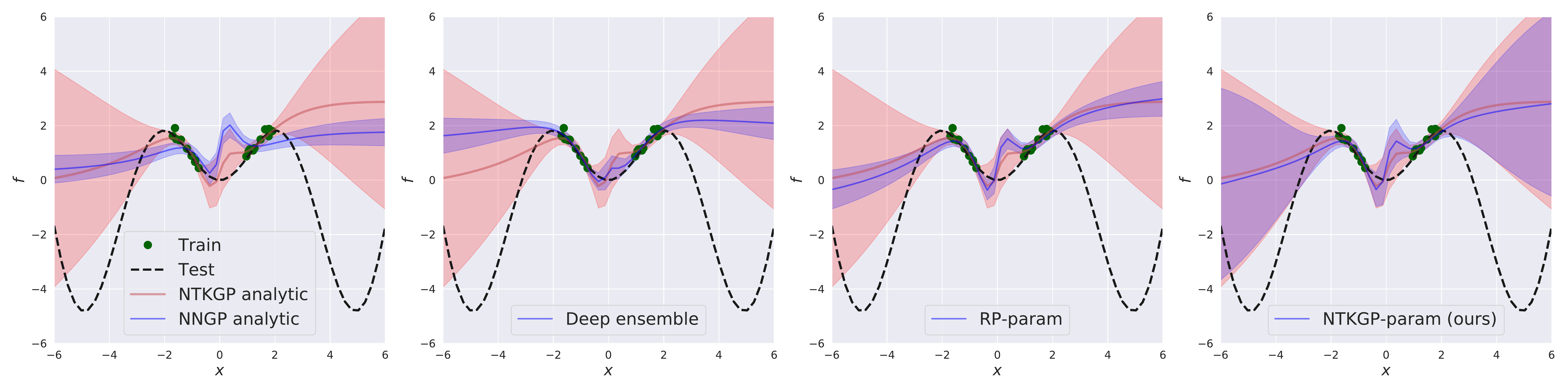}
      \vspace{-1.em}
  \caption{All subplots plot the analytic \NTKGP~posterior (in red). From left to right, (in blue): analytic NNGP posterior; deep ensembles; RP-param; and NTKGP-param (ours). For each method we plot the mean prediction and 95\% predictive confidence interval. Green points denote the training data, and the black dotted line is the true test function $y=x\text{sin}(x)$.}
  \label{fig:1d}
\end{figure}
\vspace{-1.em}
\paragraph{Flight Delays}
We now compare different ensemble methods on a large scale regression problem using the Flight Delays dataset \citep{hensman2013gaussian}, which is known to contain dataset shift. We train heteroscedastic baselearners on the first 700k data points and test on the next 100k test points at 5 different starting points: 700k, 2m (million), 3m, 4m and 5m. The dataset is ordered chronologically in date through the year 2008, so we expect the NTKGP methods to outperform standard deep ensembles for the later starting points. Figure \ref{fig:flights} (Left) confirms our hypothesis. Interestingly, there seems to be a seasonal effect between the 3m and 4m test set that results in stronger performance in the 4m test set than the 3m test set, for ensembles trained on the first 700k data points. We see that our Bayesian deep ensembles perform slightly worse than standard deep ensembles when there is little or no test data shift, but fail more gracefully as the level of dataset shift increases.

Figure \ref{fig:flights} (Right) plots confidence versus error for different ensemble methods on the combined test set of 5$\times$100k points. For each precision threshold $\tau$, we plot root-mean-squared error (RMSE) on examples where predictive precision is larger than $\tau$, indicating confidence. As we can see, our NTKGP methods incur lower error over all precision thresholds, and this contrast in performance is magnified for more confident predictions.

\begin{figure}[h]
    \centering
    \hspace*{\fill}%
    \begin{subfigure}{0.4\textwidth}
    \centering
    \includegraphics[width = \linewidth]{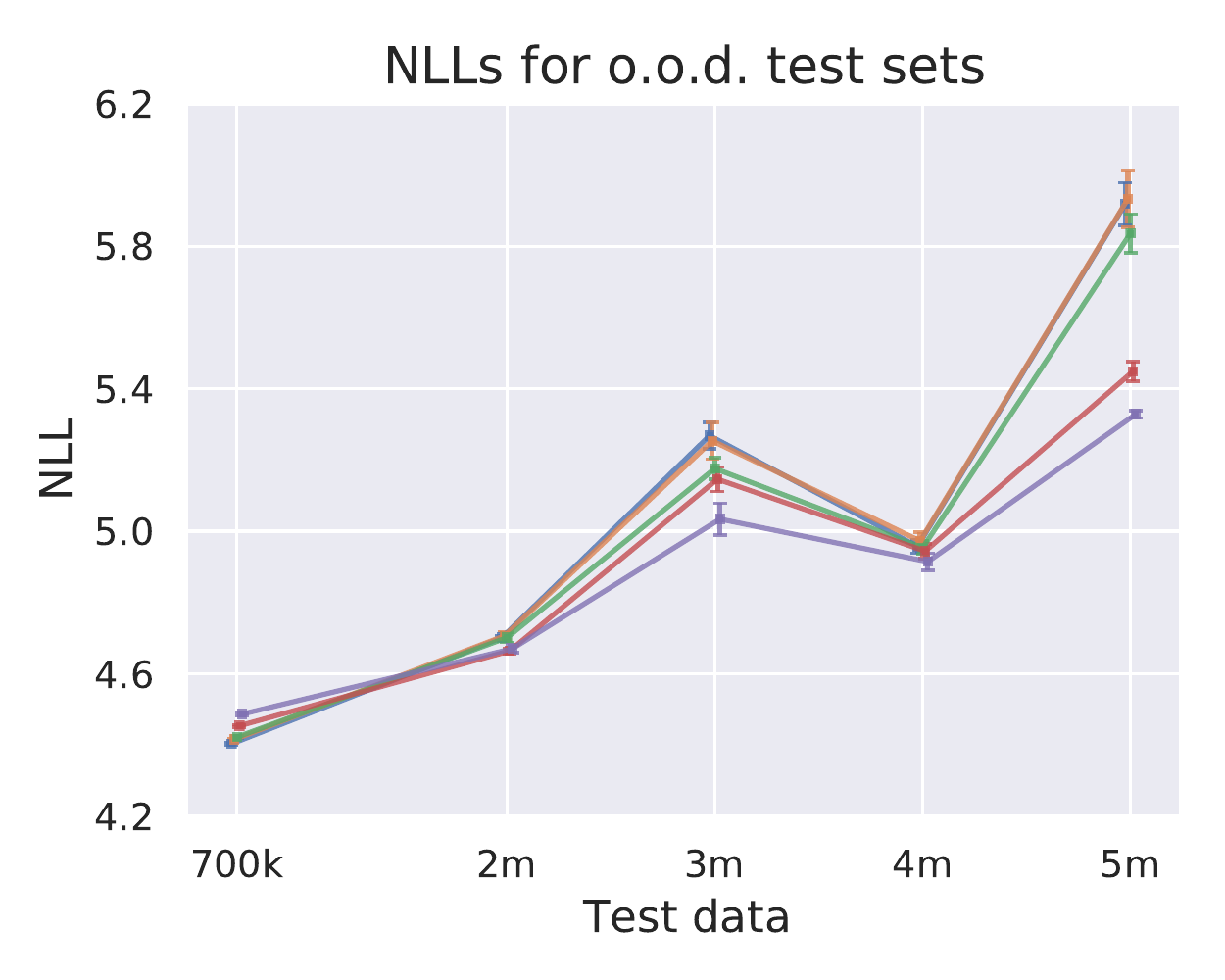}
    \end{subfigure}
    \hfill%
    \begin{subfigure}{0.4\textwidth}
    \centering
    \includegraphics[width = \linewidth]{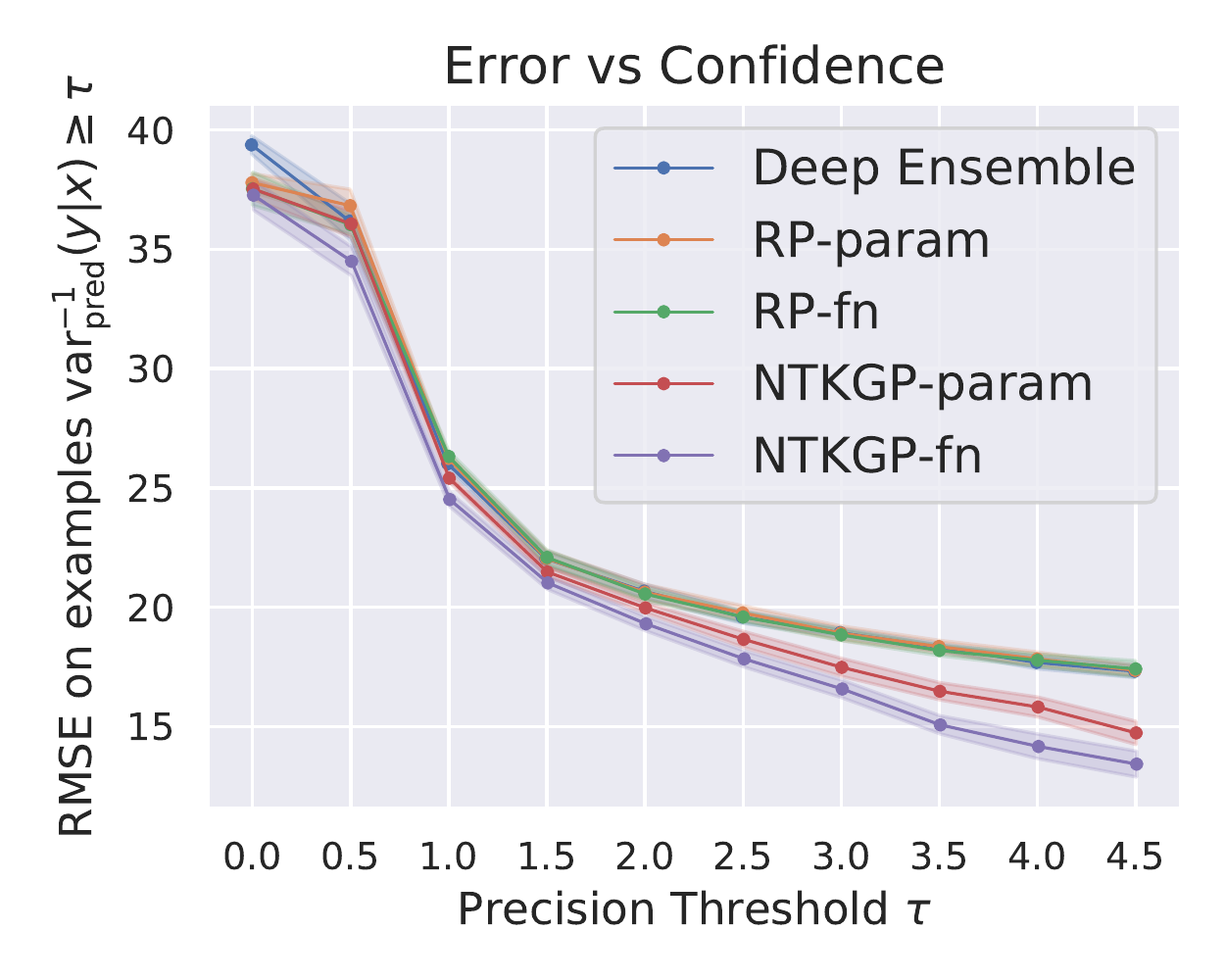}
    \end{subfigure}
    \hspace*{\fill}%
    \reducespaceafterfigure
    \caption{(Left) Flight Delays NLLs for ensemble methods trained on first 700k points of the dataset and tested on various out-of-distribution test sets, with time shift between training set and test set increasing along the $x$-axis. (Right) Error vs Confidence curves for ensembles tested on all 5$\times$100k test points combined. Both plots include 95\% CIs corresponding to 10 independent ensembles.}
    \label{fig:flights}
\end{figure}

\paragraph{MNIST vs NotMNIST} We next move onto classification experiments, comparing ensembles trained on MNIST and tested on both MNIST and NotMNIST.\footnote{Available at \url{http://yaroslavvb.blogspot.com/2011/09/notmnist-dataset.html}} Our baselearners are MLPs with 2-hidden layers, 200 hidden units per layer and ReLU activation. The weight parameter initialisation variance $\sigma_W^2$ is tuned using the validation accuracy on a small set of values around the He initialisation, $\sigma_W^2{=}2$,  \cite{He_2015_ICCV} for all classification experiments.
Figure \ref{fig:mnist} shows both in-distribution and out-of-distribution performance across different ensemble methods. In Figure \ref{fig:mnist} (left), we see that our NTKGP methods suffer from slightly worse in-distribution test performance, with around 0.2\% increased error for ensemble size $10$. However, in Figure \ref{fig:mnist} (right), we plot error versus confidence on the combined MNIST and NotMNIST test sets: for each test point $(\vx, y)$, we calculate the ensemble prediction $p(y=k|\vx)$ and define the predicted label as $\hat{y} = \text{argmax}_k p(y=k|\vx)$, with confidence $p(y=\hat y|\vx)$. Like \citet{balajideepensembles}, for each confidence threshold $0\leq\tau\leq1$, we plot the average error for all test points that are more confident than $\tau$. We count all predictions on the NotMNIST test set to be incorrect. We see in Figure \ref{fig:mnist} (right) that the NTKGP methods vastly outperform both deep ensembles and RP methods, obtaining over 15\% lower error on test points that have confidence $\tau{=}0.6$, compared to all baselines. This is because our methods correctly make much more conservative predictions on the out-of-distribution NotMNIST test set, as can be seen by \cref{fig:mnist_vs_notmnist_entropies}, which plots histograms of predictive entropies. Due to the simple MLP architecture and ReLU activation, we can compare ensemble methods to analytic NTKGP results in Figures \ref{fig:mnist} \& \ref{fig:mnist_vs_notmnist_entropies}, where we see a close match between the NTKGP ensemble methods (at larger ensemble sizes) and the analytic predictions, both on in-distribution and out-of-distribution performance.
\begin{figure}[h]
\centering
    \hspace*{\fill}%
    \begin{subfigure}{0.45\textwidth}
      \centering
          \includegraphics[width = \linewidth]{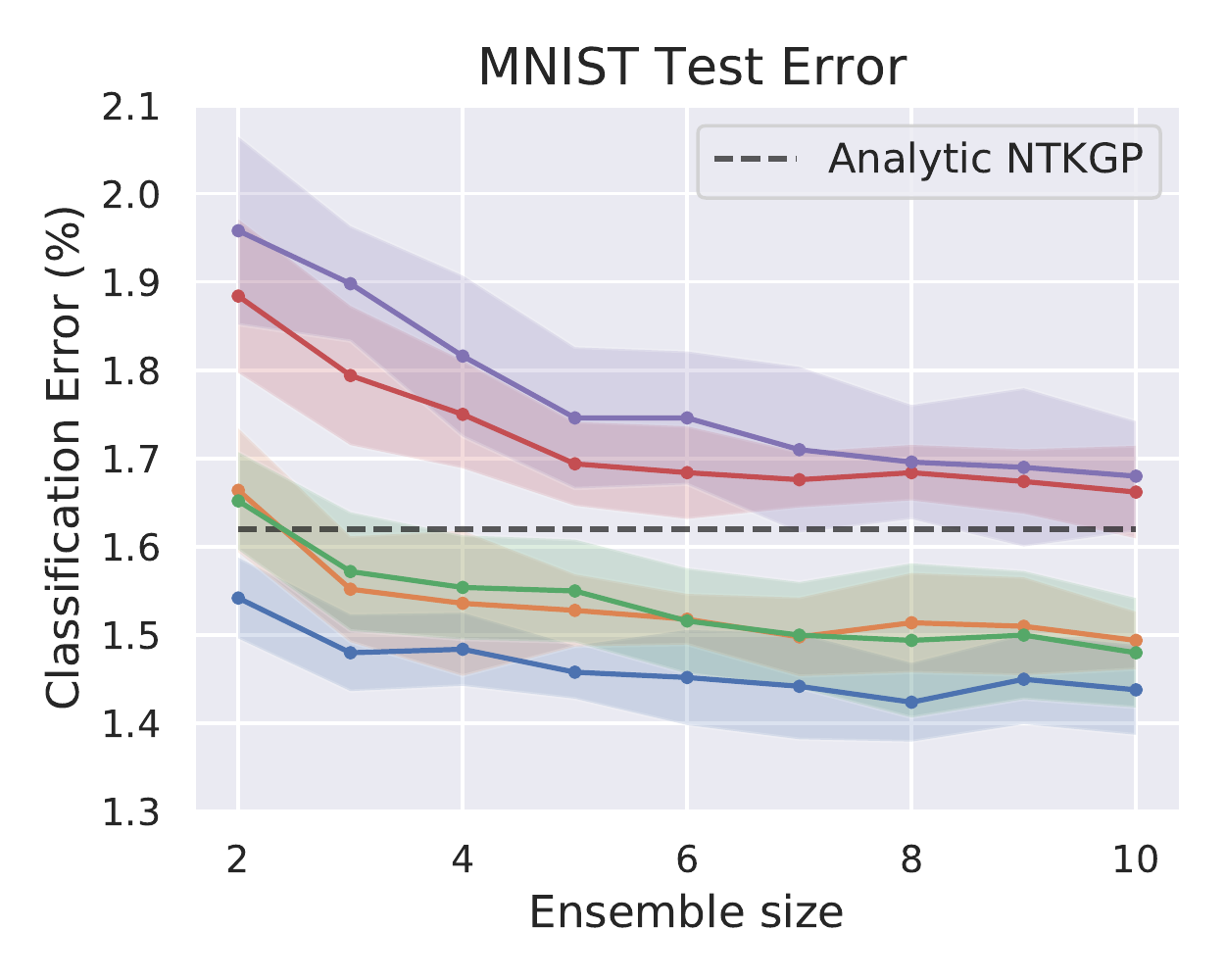}
    \end{subfigure}
    \hfill%
    \begin{subfigure}{0.45\textwidth}
      \centering
      \includegraphics[width =  \linewidth]{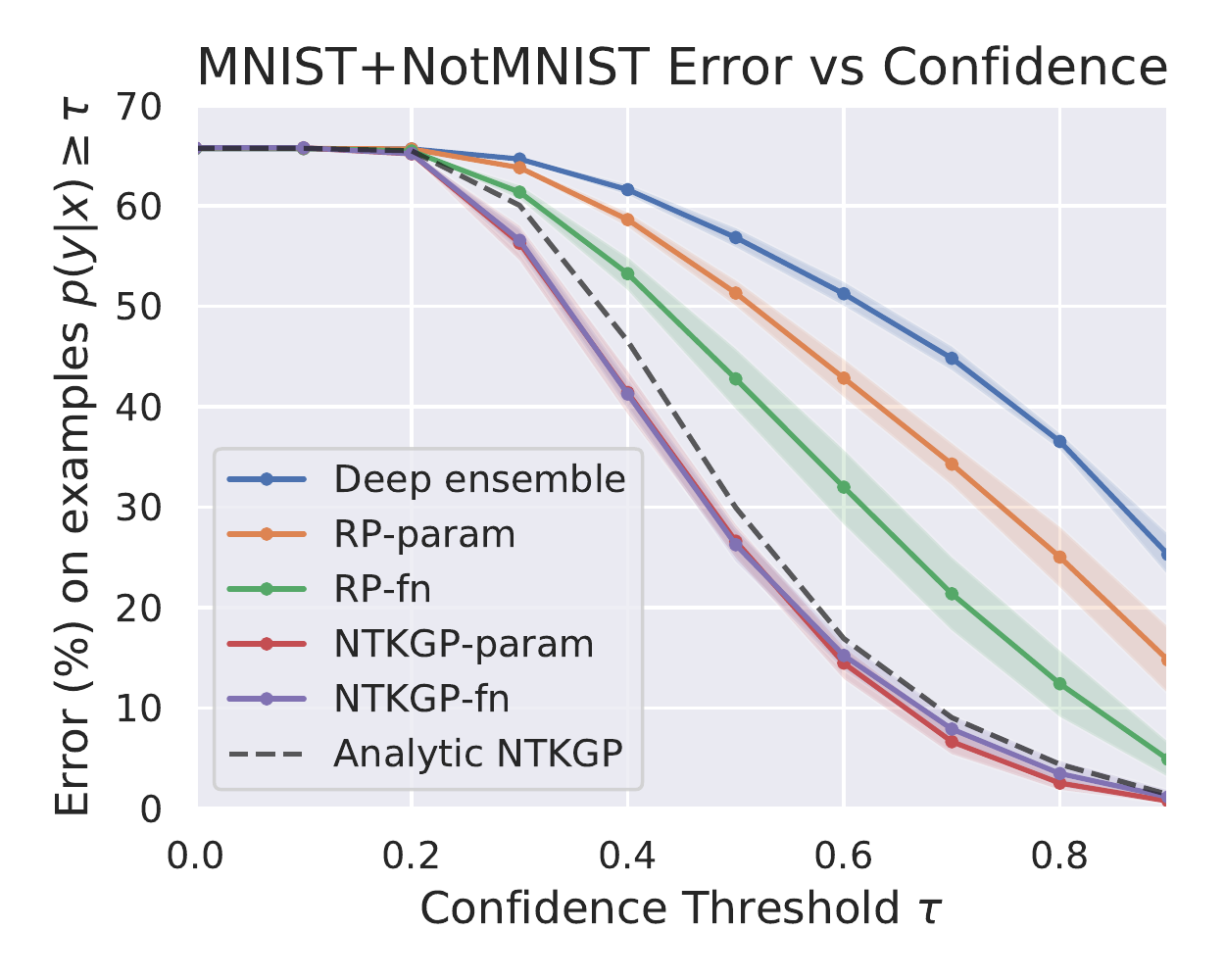}
    \end{subfigure}
    \hspace*{\fill}%
        \reducespaceafterfigure
    \caption{(Left) Classification error on MNIST test set for different ensemble sizes. (Right) Error versus Confidence plots for ensembles, of size 10, trained on MNIST and tested on both MNIST and NotMNIST. CIs correspond to 5 independent runs.}
    \label{fig:mnist}
\end{figure}

\begin{figure}[ht]
    \centering
    \begin{subfigure}[b]{0.95\textwidth}
        \includegraphics[width = \linewidth]{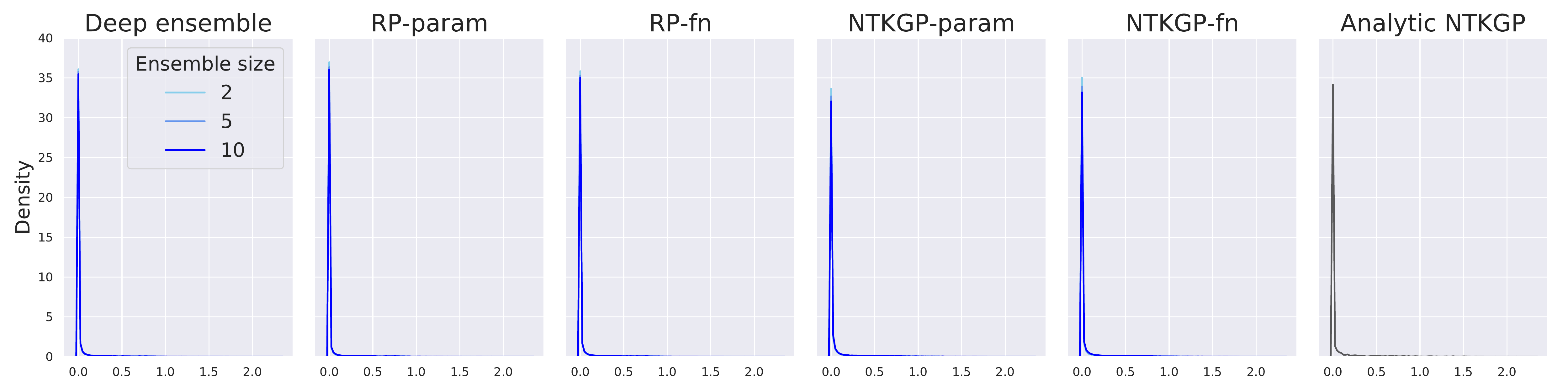}
    \end{subfigure}

    \begin{subfigure}[b]{0.95\textwidth}
        \includegraphics[width = \linewidth]{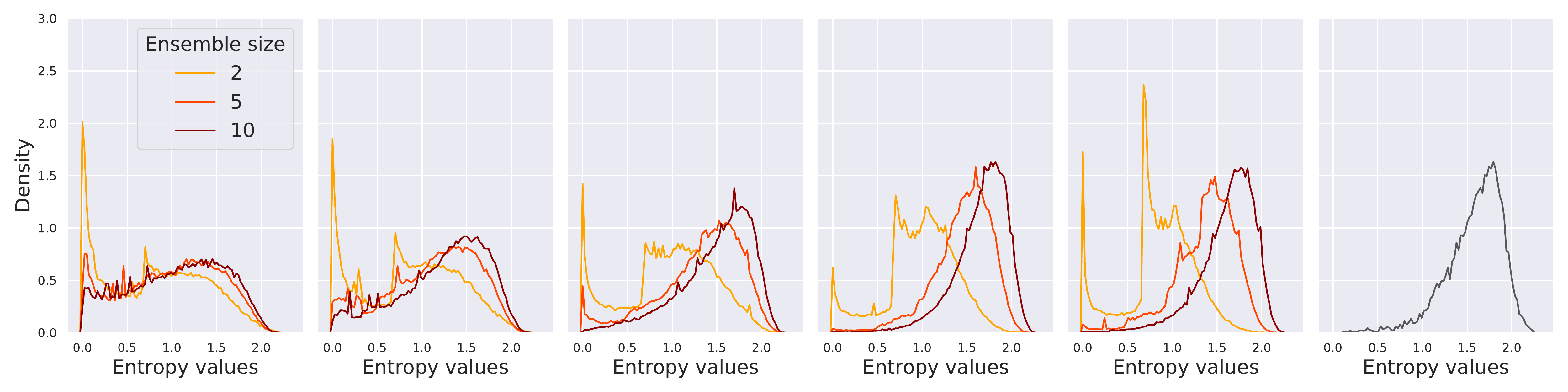}
    \end{subfigure}
    \caption{Histograms of predictive entropy on MNIST (top) and NotMNIST (bottom) test sets for different ensemble methods of different ensemble sizes, and also for Analytic NTKGP.}
    \label{fig:mnist_vs_notmnist_entropies}
\end{figure}
\paragraph{CIFAR-10 vs SVHN} Finally, we present results on a larger-scale image classification task: ensembles are trained on CIFAR-10 and tested on both CIFAR-10 and SVHN. We conduct the same setup as for the MNIST vs NotMNIST experiment, with baselearners taking the Myrtle-10 CNN architecture \cite{shankar2020neural} of channel-width 100. Figure \ref{fig:cifar10} compares in distribution and out-of-distribution performance: we see that our NTKGP methods and RP-fn perform best on in-distribution test error. Unlike on the simpler MNIST task, there is no clear difference on the corresponding error versus confidence plot, and this is also reflected in the entropy histograms, which can be found in \cref{fig:cifar10_vs_svhn_entropies} of \cref{app:experimental_details}.
    \begin{figure}[h]
\centering
    \hspace*{\fill}%
    \begin{subfigure}{0.4\textwidth}
      \centering
          \includegraphics[width = \linewidth]{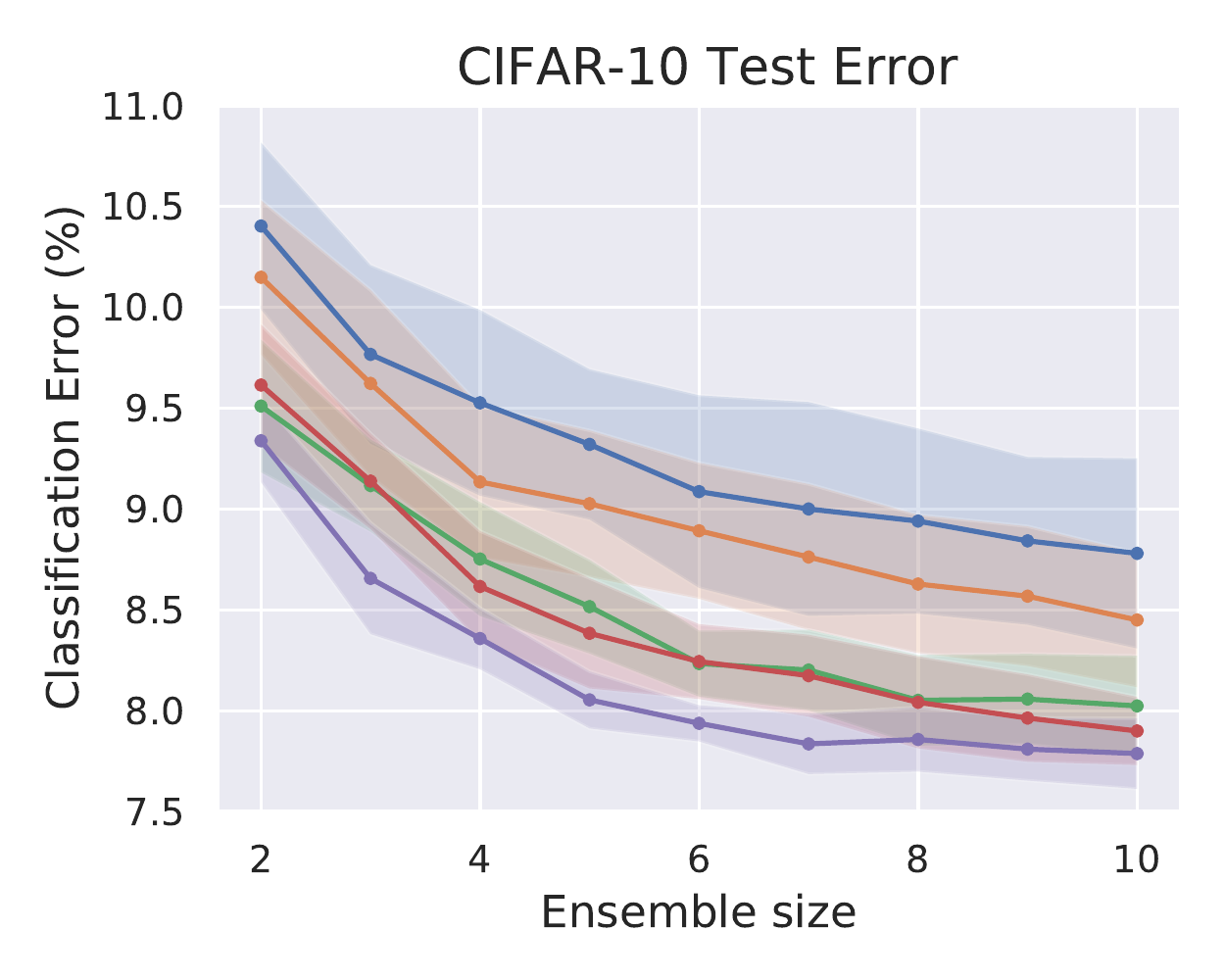}
    \end{subfigure}
    \hfill%
    \begin{subfigure}{0.4\textwidth}
      \centering
      \includegraphics[width = \linewidth]{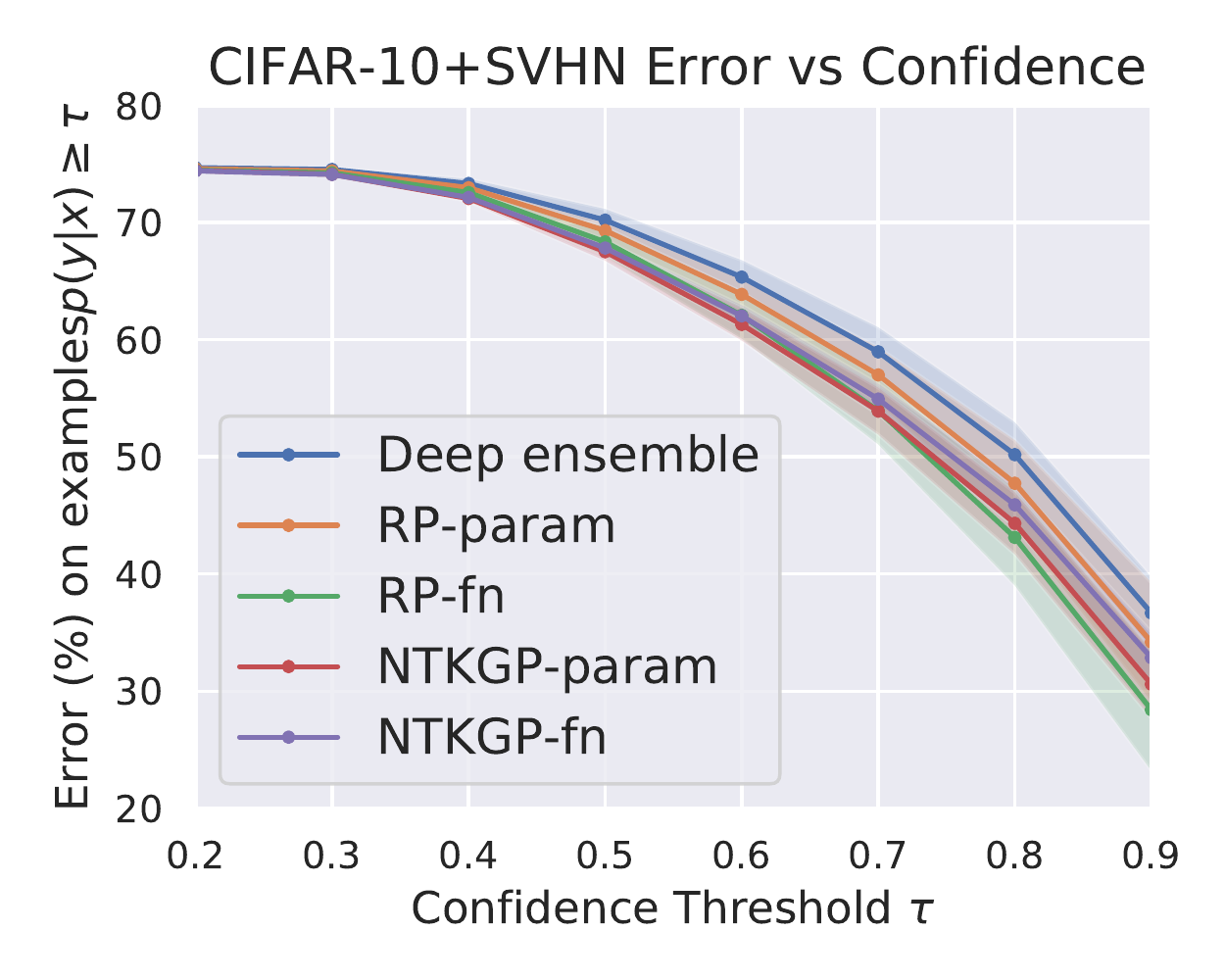}
    \end{subfigure}
    \hspace*{\fill}%
        \reducespaceafterfigure
    \caption{(Left) Classification error on CIFAR-10 test set for different ensemble sizes. (Right) Error versus Confidence plots of ensembles trained on CIFAR-10 and tested on both CIFAR-10 and SVHN. CIs correspond to 5 independent runs.}
    \label{fig:cifar10}
\end{figure}

\section{Discussion}
We built on existing work regarding the Neural Tangent Kernel (NTK), which showed that there is no posterior predictive interpretation to a standard deep ensemble in the infinite width limit. We introduced a simple modification to training that enables a GP posterior predictive interpretation for a wide ensemble, and showed empirically that our Bayesian deep ensembles emulate the analytic posterior predictive when it is available. In addition, we demonstrated that our Bayesian deep ensembles often outperform standard deep ensembles in out-of-distribution settings for both regression and classification tasks.

In terms of limitations,
our methods may perform worse than standard deep ensembles \citep{balajideepensembles} when confident predictions are not detrimental, though this can be alleviated via NTK hyperparameter tuning. Moreover,
our analyses are planted in the ``lazy learning'' regime \citep{chizat2019lazy, woodworth2020kernel}, and we have not considered finite-width corrections to the NTK during training \citep{dyer2019asymptotics, huang2019dynamics, Hanin2020Finite}. In spite of these limitations, the search for a Bayesian interpretation to deep ensembles \citep{balajideepensembles} is of particular relevance to the Bayesian deep learning community, and we believe our contributions provide useful new insights to resolving this problem by examining the limit of infinite-width.

A natural question that emerges from our work is how to tune hyperparameters of the NTK to best capture inductive biases or prior beliefs about the data. Possible lines of enquiry include: the large-depth limit \citep{hayou2019mean}, the choice of architecture \citep{zaidi2020neural}, and the choice of activation \citep{tancik2020fourier}. Finally, we would like to assess our Bayesian deep ensembles in non-supervised learning settings, such as active learning or reinforcement learning.

\clearpage

\begin{ack}
We thank
Arnaud Doucet,
Edwin Fong,
Michael Hutchinson,
Lewis Smith, Jasper Snoek, Jascha Sohl-Dickstein and
Sheheryar Zaidi, as well as the anonymous reviewers,
for helpful discussions and feedback. We also thank the JAX and Neural Tangents teams for their open-source software. BH is supported by the EPSRC and MRC through the OxWaSP CDT programme (EP/L016710/1).
\end{ack}

{%
\bibliography{references}
\bibliographystyle{unsrtnat}
}
\clearpage
\appendix
\section{Recap of standard and NTK parameterisations}\label{app:parameterisations}
For completeness, we recap the difference between standard and NTK parameterisations \& initialisations \citep{jacot2018neural, lee2019wide} for an MLP in this section.

Consider an MLP with $L$ hidden layers of  widths from $n_0{=}d$ to $n_L$ respectively, and final readout layer with $n_{L+1}=C$. For a given $\vx\in\mathbb{R}^{d}$, under the \textit{NTK} parameterisation the recurrence relation that constitutes the forward pass of the NN is then:
\begin{align}
\alpha^{(0)}(\vx, \vtheta) &= \vx \\
\tilde{\alpha}^{(l+1)}(\vx, \vtheta) &= \frac{\sigma_W}{\sqrt{n_l}} W^{(l)} \alpha^{(l)}(\vx, \vtheta) + \sigma_b b^{(l)} \\
\alpha^{(l)}(\vx, \vtheta) &= \phi (\tilde{\alpha}^{(l)}(\vx, \vtheta))
\end{align}
for $l\leq L$ where $\tilde{\alpha}^{(l)}$ and $\alpha^{(l)}$ are the preactivations and activations respectively at layer $l$, with entrywise nonlinearity $\phi(\cdot)$. In the NTK parameteriation, all parameters $W^{(l)} \in \mathbb{R}^{n_{l+1}\times n_{l}}$ and $b^{(l)} \in \mathbb{R}^{n_{l+1}}$ for all layers $l$ are initialised as i.i.d. standard normal $\mathcal N(0, 1)$. The hyperparameters $\sigma_W$ and $\sigma_b$ are known as the weight and bias variances respectively, and are hyperparameters of the infinite width limit NTK $\Theta$.

On the other hand, under \textit{standard} parameterisation, the recurrence relation of the NN is:

\begin{align}
\alpha^{(0)}(\vx, \vtheta) &= \vx \\
\tilde{\alpha}^{(l+1)}(\vx, \vtheta) &= W^{(l)} \alpha^{(l)}(\vx, \vtheta) + b^{(l)} \\
\alpha^{(l)}(\vx, \vtheta) &= \phi (\tilde{\alpha}^{(l)}(\vx, \vtheta))
\end{align}

with $W^{(l)}_{i,j}\sim \mathcal N(0, \frac{1}{n_l}\sigma^2_W)$ and $b^{(l)}_j \sim \mathcal N(0, \sigma^2_b)$ at initialisation. Commonly used initialisation schemes like LeCun \citep{lecun2012efficient} or He \citep{He_2015_ICCV} fall into this category.

Regardless of parameterisation, our notation from \cref{section:ntk,,section:correction} corresponds to $f(\vx, \vtheta)=\tilde \alpha^{(L+1)}(\vx, \vtheta)$, with $\vtheta=\{W^{(l)}, b^{(l)}\}_{l=0}^L$, $\vtheta^{\leq L} =\{W^{(l)}, b^{(l)}\}_{l=0}^{L-1}$ and $\vtheta^{L+1} =\{W^{(L)}, b^{(L)}\}$.

We see that the different parameterisations yield the same distribution for the functional output $f(\cdot, \vtheta)$ at initialisation, but give different scalings to the parameter gradients in the backward pass. \citet{sohl2020infinite} have recently explored further variants of these parameterisations.

\section{Proofs}
\subsection{Proof of Proposition~\ref{prop:inf_width_dist}}\label{sec:proof_prop_inf_width_dist}
\infwidthdistprop*
\begin{proof}
For notational ease, let us define two jointly independent GPs $g(\cdot)\overset{d}{\sim}\mathcal{GP}(0, \Theta^{\leq L})$ \& $h(\cdot)\overset{d}{\sim}\mathcal{GP}(0, \mathcal{K})$. By independence, we have $g(\cdot)+h(\cdot)\overset{d}{\sim}\mathcal{GP}(0, \Theta)$. Moreover, let $\delta_m(\cdot), f^0_m(\cdot)$ and $\vtheta^{0}_{m}$ denote $\delta(\cdot)$, $f_0(\cdot)$ and $\vtheta_0$ respectively at width parameter $m\in \mathbb{N}$. The infinite width limit thus corresponds to $m\rightarrow \infty$.

For our purposes, it will be sufficient to prove convergence of finite-dimensional marginals, $(\delta_m(\mathcal{X}), f^0_m(\mathcal{X'}))\overset{d}{\rightarrow}(g(\mathcal{X}), h(\mathcal{X'}))$ jointly, for arbitrary sets of inputs $\mathcal{X}, \mathcal{X'}$. Note that previous work \citep{lee2018deep, alexander2018matthews} has already shown that $f_m^0(\X')\overset{d}{\rightarrow}h(\X')$.

The proof that $(\delta_m(\mathcal{X}), f^0_m(\mathcal{X'}))\overset{d}{\rightarrow}(g(\mathcal{X}), h(\mathcal{X'}))$ relies on L\'evy's Convergence theorem \citep{williams_1991} and the Cram\'er-Wold device (Theorem 29.4 of \citep{Bill86}). Using these results it is sufficient to show, denoting $\varphi_X$ as the characteristic function of a random variable $X$, that:
\begin{align}
    \varphi_{Y_m}(t)\rightarrow \varphi_{Y}(t)
\end{align}
where
$Y_m = u^\top\delta_m(\mathcal{X}) + v^\top f^0_m(\mathcal{X'})$ and $Y = u^\top g(\mathcal{X}) + v^\top h(\mathcal{X'})$, for all $t\in\mathbb{R}, u\in\mathbb{R}^{|{\mathcal{X}}|C}$ and $v\in\mathbb{R}^{|\mathcal{X'}|C}$. But:
\begin{align}
    \varphi_{Y_m}(t) =& \mathbb{E}\big[\text{exp}(itY_m)\big] \\
    =& \mathbb{E}\Big[\mathbb{E}\big[\text{exp}(itY_m)\given[\big]\vtheta_{0,m}\big]\Big] \\
    =& \mathbb{E}_{\vtheta_{0,m}}\big[\text{exp}\big(-t^2u^\top \hat{\Theta}_{0,m}^{\leq L}(\mathcal{X}, \mathcal{X})u + itv^\top f_m^0(\mathcal{X'})\big)\big] \label{eqn:rem1}\\
    =& \text{exp}\big(-t^2u^\top \Theta^{\leq L}(\mathcal{X}, \mathcal{X})u \big)\mathbb{E}_{\vtheta_{0,m}}\big[\text{exp}\big(itv^\top f_m^0(\mathcal{X'})\big)\big] + r_m \label{eqn:rem2}\\
    \rightarrow& \mathbb{E}\big[\text{exp}(itY)\big]
\end{align}
where $r_m$, defined as the difference between Eqs.  (\ref{eqn:rem2}) \& (\ref{eqn:rem1}), can be shown to be $o_m(1)$ using the Bounded Convergence theorem and the empirical NTK convergence results, and by noting that proofs of NTK convergence \citep{jacot2018neural,lee2019wide,yang2019scaling, yang2020tensorii} are all done on a layer-by-layer basis.

The claim that $\tilde f_0(\cdot) = f_0(\cdot) + \delta(x)\overset{d}{\rightarrow}\mathcal {GP}(0, \Theta)$ then follows by setting $\X=\X'$ and $v=u$.

\end{proof}
\subsection{Proof of Proposition \ref{prop:trained_cov}}\label{section:prop_proof_trained_cov}
\trainedcovprop*
\begin{proof}
    We will prove the case for $\sigma^2>0$ as the case for $\sigma^2=0$ is similar, and one can replace inversions of $\Theta(\mathcal X, \mathcal X)$ and $\mathcal K(\mathcal X, \mathcal X)$ with generalised inverses if need be.

    Let $\mathcal X'$ be an arbitrary test set.
    We will first show  $\Sigma_{\text{RP}}\succeq\Sigma_{\text{NNGP}}$. It will suffice to show that $\Sigma_{\text{RP}}(\mathcal X',\mathcal X')-\Sigma_{\text{NNGP}}(\mathcal X',\mathcal X')\succeq 0$ is a p.s.d. matrix. But it is not hard to check that:
    \begin{align}
            \Sigma_{\text{RP}}(\mathcal X',\mathcal X')-\Sigma_{\text{NNGP}}(\mathcal X',\mathcal X') = U(\K(\X, \X) + \sigma^2 I)U^\top
    \end{align}
    which is clearly p.s.d,
    where $U = \Theta(\mathcal X', \mathcal X)(\Theta(\X, \X) + \sigma^2I)^{-1} {-} \mathcal K(\mathcal X', \mathcal X)(\K(\X, \X) + \sigma^2I)^{-1}\in\mathbb{R}^{|\mathcal X'|\times|\mathcal X|}$

    Likewise, to show $\Sigma_{\text{NTK}}\succeq\Sigma_{\text{RP}}$ we can check that:
    \begin{align}\label{eqn:ntk_minus_mix}
        \Sigma_{\text{NTK}}(\mathcal X',\mathcal X')-\Sigma_{\text{RP}}(\mathcal X',\mathcal X') = U_1 + U_2\Delta(\mathcal X, \mathcal X)U_2^\top \succeq 0
    \end{align}
    where
    \begin{align}
    U_1 = \Delta(\mathcal X', \mathcal X') - \Delta(\mathcal X', \mathcal X) \Delta^g(\mathcal X, \mathcal X)\Delta(\mathcal X, \mathcal X')
    \end{align}
    and $\Delta = \Theta^{\leq L}\succeq 0$ is the contributions to the NTK from parameters before the final layer as before. Finally, we need to define $U_2$ as:
    \begin{align}
        U_2 = \Theta(\mathcal X', \mathcal X)(\Theta(\X, \X) + \sigma^2 I )^{-1} - \Delta(\mathcal X', \mathcal X)\Delta^g(\mathcal X, \mathcal X)
    \end{align}
    The notation $\Delta^g(\mathcal X, \mathcal X) $ denotes the generalised inverse. $U_1\succeq 0 $ follows from standard properties of generalised Schur complements, as does the fact that $\Delta(\X', \X)\Delta^g(\X, \X)\Delta(\X, \X)=\Delta(\X', \X)$, which is required for Eq.~(\ref{eqn:ntk_minus_mix}) to hold.

\end{proof}
\section{Alternative constructions of \NTKGP~baselearners}\label{sec:alternative_constructions}
To summarise the analysis in Section \ref{section:correction}, the criteria for an NTKGP baselearner $\tilde f(\cdot, \vtheta)$ is that: \\ 1) $\tilde{f}(\cdot, \vtheta_0)\overset{d}{\rightarrow}\mathcal{GP}(0, \Theta)$ as width increases, while 2) preserving the initial Jacobian $\nabla_{\vtheta}f_0(\cdot)=\nabla_{\vtheta}\tilde f_0(\cdot)$.

A possible alternative construction would be if one could (approximately) sample a fixed $f^*{\overset{d}{\sim}}\mathcal{GP}(0, \Theta)$,
and set:
\begin{align}\label{eqn:delta2}
    \tilde{f}_t(\cdot) = f_t(\cdot) + f^*(\cdot) - f_0(\cdot)
\end{align}
It is easy to approximately sample $f^*$ for finite width NNs using a single JVP, under either standard or NTK parameterisation, by sampling $\tilde{\vtheta}$ independent of $\vtheta_0$ and setting:
\begin{align}\label{eqn:jvp2}
f^*(\vx) = \nabla_{\vtheta}f(\vx, \vtheta_0)\tilde{\vtheta}
\end{align}

Note that \cref{eqn:delta2} requires computation of two forward passes $f_t$ and $f_0$ in addition to a JVP $\nabla_{\vtheta}f(\vx, \vtheta_0)\tilde{\vtheta}$. For some implementations of JVPs, such as in JAX \citep{jax2018github}, the computation of $f_0$ will come essentially for free alongside the computation of $\nabla_{\vtheta}f(\vx, \vtheta_0)\tilde{\vtheta}$, because the JVP is centered about the same ``primal'' parameters $\vtheta_0$
that are used for $f_0$. Hence, this alternative $\tilde f$ presented in \cref{eqn:delta2} may have similar costs to our main construction in \cref{section:correction}, for certain AD packages.

A second valid alternative to $\tilde{f}_t$ would be to replace $f_t$ with $f_t^\text{lin}$, which would give $\tilde{f}^{\text{lin}}(\vx, \vtheta_t) = \nabla_{\vtheta} f(\vx, \tilde{\vtheta})\vtheta_t$ (where we swap $\tilde{\vtheta}$ and $\vtheta_0$ for notational consistency with other NTKGP methods, and initialise at $\vtheta_0$). Because $\tilde{\vtheta}$ is fixed, we see that $\tilde{f}^{\text{lin}}(\cdot, \vtheta_t)$ is linear in $\vtheta_t$. This gives a realisation of the ``sample-then-optimize'' approach \citep{matthewssample} to give posterior samples from randomly initialised linear models, and ensures that $\tilde{f}^{\text{lin}}_{\infty}(\cdot)$ is an exact posterior sample (using the empirical NTK $\hat{\Theta}_0$ as prior kernel) irrespective of parameterisation or width. Note though, of course, the linearised regime holds for $\tilde{f}^{\text{lin}}_t$ throughout parameter space, hence for strongly convex optimisation problems like regression tasks with observation noise, the initialisation is irrelevant. We will call $\tilde{f}^{\text{lin}}_{\infty}$ trained in such a way an \textit{\NTKGP-Lin} baselearner.

\section{Regularisation in the NTKGP and RP training procedures}\label{sec:reg}
As stated in Lemma 3 of \citet{osband2018randomized}, suppose we are in a Bayesian linear regression setting with linear map $g_{\vtheta}(\vz)=\vz^\top\vtheta$, model $y=g_{\vtheta}(\vz) + \epsilon$  for $\eps\sim \mathcal N(0,\sigma^2)$ i.i.d., and parameter prior $\vtheta\sim N(0, \lambda I_p)$. Then, having observed training data $\{(\vz_i, y_i)\}_{i=1}^n$, solving the following optimisation problem returns a posterior sample $\vtheta$:
\begin{align}\label{eqn:sample_then_optimise}
    \tilde \vtheta + \underset{\vtheta}{\text{argmin}}\sum_{i=1}^n \frac{1}{2\sigma^2} \norm{\tilde{y}_i - (g^{\mathstrut}_{\vtheta} + g_{\tilde \vtheta})(\vz_i)}_2^2 + \frac{1}{2\lambda}\norm{\vtheta}_2^2
\end{align}
where $\tilde{y}_i\sim \mathcal N(y_i,\sigma^2)$ and $\tilde \vtheta \sim \mathcal N(0, \lambda I_p)$.

We see that when there is a homoscedastic prior $\mathcal N(0, \lambda I_p)$ for $\vtheta$ that the correct weighting of $L^2$ regularisation is $\norm{\vtheta}_{\Lambda}^2 = \frac{1}{\lambda}\vtheta^\top \vtheta$. In fact, even with a heteroscedastic prior $\vtheta \sim \mathcal N(0, \Lambda)$ with a diagonal matrix $\Lambda \in \mathbb{R}_+^{p\times p}$ and diagonal entries $\{\lambda_j\}_{j=1}^p$, it is straightforward to show that the correct setting of regularisation is $\norm{\vtheta}_{\Lambda}^2 = \vtheta^\top \Lambda^{-1}\vtheta$ in order to obtain a posterior sample of $\vtheta$. For RP-param or NTKGP-param methods, with initial parameters $\vtheta_0$, we have regularisation  $\norm{\vtheta - \vtheta_0}_{\Lambda}^2 = (\vtheta-\vtheta_0)^\top \Lambda^{-1}(\vtheta-\vtheta_0)$, which can be seen as a Mahalanobis distance.

For an NN in the linearised regime \citep{lee2019wide}, this is related to the fact that the NTK and standard parameterisations initialise parameters differently, yet yield the same functional distribution for a randomly initialised NN. In the standard parameterisation, $\lambda_j$ will be a factor of the NN width smaller than in the NTK parameterisation, but the corresponding feature map $\vz$ will be a square root factor of the NN width larger. Thus, solving Eq.~(\ref{eqn:sample_then_optimise}) will lead to the same functional outputs in both parameterisations, if the NN remains in the linearised regime. However, only with our NTKGP trained baselearners $\tilde f$ do you get a posterior interpretation to the trained NN because of the difference between the NNGP and the NTK that standard training does not account for, and because the linearised regime only holds locally to the parameter initialisation.

\section{Additional ensemble algorithms}\label{app:algorithms}
Here, we present our ensemble algorithms for NTKGP-Lin (\cref{alg:ntkgp_lin}) and NTKGP-fn (\cref{alg:ntkgp_fn}), to complement the NTKGP-param algorithm that was presented in \cref{section:algorithms}.
\begin{algorithm}[H]
\caption{\NTKGP-Lin ensemble}
\begin{algorithmic}\label{alg:ntkgp_lin}
\REQUIRE Data $\train=\{\X,\Y\}$, loss function $\mathcal{L}$, NN model $f_{\vtheta}:\mathcal{X}\rightarrow \mathcal{Y}$, Ensemble size $K\in\mathbb{N}$, noise procedure: \texttt{data\_noise}, NN parameter initialisation scheme: \texttt{init}($\cdot$)  \\
\FOR{$k=1,\ldots,K$}
    \STATE Form $\{\mathcal X_k, \mathcal Y_k\} = \texttt{data\_noise}(\mathcal{D})$ \;
    \STATE Initialise $\vtheta_k\overset{d}{\sim} \texttt{init}(\cdot)$ \;
    \STATE Initialise $\tilde{\vtheta}_{k}\overset{d}{\sim} \texttt{init}(\cdot)$  \;
    \STATE Define $\tilde{f}^{\text{lin}}_k(\vx, \vtheta_t) = \nabla_{\vtheta}f(\vx, \tilde \vtheta_k)\vtheta_t$ and set $\vtheta_0=\vtheta_k$
    \STATE Optimise $\mathcal{L}(\tilde{f}^{\text{lin}}_k(\mathcal{X}_k, \vtheta_t), \mathcal{Y}_k) + \frac{1}{2}\norm{\vtheta_t - \vtheta_k}_{\Lambda}^2$ for $\vtheta_t$  to obtain $\hat\vtheta_k$
\ENDFOR
\RETURN ensemble $\{\tilde{f}^{\text{lin}}_k(\cdot,\hat\vtheta_k)\}_{k=1}^K$
\end{algorithmic}
\end{algorithm}

\begin{algorithm}[H]
\caption{\NTKGP-fn ensemble}
\begin{algorithmic}\label{alg:ntkgp_fn}
\REQUIRE Data $\train=\{\X,\Y\}$, loss function $\mathcal{L}$, NN model $f_{\vtheta}:\mathcal{X}\rightarrow \mathcal{Y}$, Ensemble size $K\in\mathbb{N}$, noise procedure: \texttt{data\_noise}, NN parameter initialisation scheme: \texttt{init}($\cdot$)  \\
\FOR{$k=1,\ldots,K$}
    \STATE Form $\{\mathcal{X}_k, \mathcal Y_k\} = \texttt{data\_noise}(\mathcal{D})$\;
    \STATE Initialise $\vtheta_k\overset{d}{\sim} \texttt{init}(\cdot)$\;
    \STATE Initialise $\tilde{\vtheta}_{k}\overset{d}{\sim} \texttt{init}(\cdot)$ and denote $\tilde{\vtheta}_k=\texttt{concat}(\{\tilde{\vtheta}_k^{\leq L}, \tilde{\vtheta}_k^{L+1}\})$ \;
    \STATE  Set $\vtheta_k^* = \texttt{concat}(\{\sqrt{2}\tilde{\vtheta}_k^{\leq L}, \tilde{\vtheta}_k^{L+1}\})$\;

    \STATE Define $\delta(\vx) = \nabla_{\vtheta}f(\vx, \vtheta_k)\vtheta^{*}_{k}$
    \STATE Define $\tilde{f}_k(\vx, \vtheta_t) = f(\vx, \vtheta_t) + \delta(\vx)$ and set $\vtheta_0=\vtheta_k$\;
    \STATE Optimise $\mathcal{L}(\tilde{f}_k(\mathcal{X}_k, \vtheta_t), \mathcal{Y}_k) + \frac{1}{2}\norm{\vtheta_t}_{\Lambda}^2$ for $\vtheta_t$ to obtain $\hat\vtheta_k$
\ENDFOR
\RETURN ensemble $\{\tilde{f}_k(\cdot,\hat \vtheta_k)\}_{k=1}^K$

\end{algorithmic}
\end{algorithm}
In \cref{alg:ntkgp_fn} we seek to reinitialise $\tilde f_k(\vx, \vtheta_0)$ from $\mathcal{GP}(0, \mathcal K)$ to $\mathcal{GP}(0, 2\Theta)$ in the infinite width limit, following the randomised prior function method of \citet{osband2018randomized}. While there are many ways to do this we choose to use only one JVP, with a reweighted tangent vector, for $\delta(\cdot)$ in order to reduce extra computational costs. It would be similarly possible to model a scaling factor $\beta$ for the prior function, like \citep{osband2018randomized}, using a single JVP with a differently reweighted tangent vector.

Note also that for the NTKGP-fn it is unreasonable to assume that the linearised NN dynamics will hold true for the duration of training because, unlike in NTKGP-param (\cref{alg:ntkgp_param}) we regularise towards the origin not the initialised parameters.
\section{Aggregating predictions from ensemble members}\label{app:aggregation}
For completeness, we now describe how to aggregate predictions from ensemble members. Given a test point $(\vx, y)$, for each baselearner NN $k\leq K$, we suppose we have a probabilistic prediction $p_k(y|\vx)$ obtained from the NN output. We then treat the ensemble as a uniformly-weighted mixture model over baselearners and combine predictions as $p(y|\vx)=\frac{1}{K}\sum_{k=1}^K p_k(y|\vx)$. For our Bayesian deep ensembles, we can view this aggregation as a Monte Carlo approximation of the GP posterior predictive with NTK prior.

For classification tasks, this aggregation is exactly an average of predicted probabilities. For regression tasks, the prediction is a mixture of normal distributions, and we follow \citet{balajideepensembles} by approximating the ensembled prediction as a single Gaussian with matched moments. That is to say, if $p_k(y|\vx)\sim \mathcal N(\mu_k(\vx), \sigma^2_k(\vx))$, then we approximate $p(y|\vx)$ by $\mathcal N(\mu_*(\vx), \sigma_*^2(\vx))$ for $\mu_*(\vx)=\frac{1}{K}\sum_k \mu_k(\vx)$ and $\sigma^2_*(\vx)=\frac{1}{K}\sum_k(\mu^2_k(\vx)-\mu^2_*(\vx)) + \sigma^2_k(\vx)$.

\section{Comparison of memory and computation costs for ensemble methods}\label{app:costs}
There is only a negligible training-time computational overhead for our NTKGP methods compared to other ensemble methods \citep{balajideepensembles,osband2018randomized}, for a training set of fixed size (e.g. MNIST, CIFAR-10). This is because one can obtain and store our fixed additive JVPs $\delta$ in a single pass over the training data. For test-time constrained applications, one can employ ensemble distillation \citep{tran2020hydra} for our NTKGP ensembles as one would for standard deep ensembles.

For completeness, we include in Table \ref{tab:costs} (left) the computational cost of different ensemble methods when the modified forward pass $\tilde{f}$ needs to be computed on the fly for new data, though we again stress that this is not necessary for train nor test time, as described in the paragraph above. A rule of thumb for a library offering forward-mode AD, like JAX \citep{jax2018github}, is that a JVP costs on the order of three standard forward passes in terms of FLOPs. We use forward-mode AD to compute JVPs as this is known to be more memory-efficient than reverse-mode AD for JVP computation. It is worth pointing out that our methods share the same trainable parameters as standard deep ensembles, and so do not incur any additional computational cost in the backward pass.

\begin{table*}[ht]
\centering
\caption{Comparison of computational and memory costs of different ensemble methods per ensemble member. Computational costs are specified per (modified) forward pass and represent a naive worst-case scenario (presented for completeness); a more astute approach renders only a negligible difference between ensemble methods, as discussed in this section.}

\begin{tabular}{@{}crrcrr@{}}
\addlinespace
\toprule
 \multicolumn{1}{c}{Method}& \multicolumn{2}{c}{Computational cost} & \phantom{abc} & \multicolumn{2}{c}{Parameter sets to store}\\

\cmidrule{2-3} \cmidrule{5-6}
 & Forward passes & JVPs && Train time & Test time \\ \midrule
Deep ensembles         & 1              & 0   && 1          & 1   \\
RP-param & 1              & 0   && 2          & 1      \\
RP-fn    & 2              & 0   && 2          & 2   \\
\NTKGP-param            & 1              & 1   && 3          & 3   \\
\NTKGP-fn               & 1              & 1   && 3          & 3  \\
\bottomrule
\addlinespace
\end{tabular}
\label{tab:costs}
\end{table*}

In terms of memory, both \NTKGP~and RP methods require storage of extra sets of parameters in order to compute the untrainable additive functions $\delta(\cdot)$ and regularise in parameter space, displayed in Table \ref{tab:costs} (right). However, the activations of the extra forward pass in the Randomised prior function method need not be stored. And moreover, forward mode JVPs are composed alongside the primitive operations that comprise the forward pass, so the memory requirements incurred by the extra JVP are independent of the NN depth for our \NTKGP~methods. Note that the memory bottleneck for large NNs is most often from the need to store activations for the backward pass \citep{gomez_reversible} and not from storing parameter sets, hence our NTKGP ensembles are not affected by the main memory bottleneck for large NNs, relative to standard deep ensembles.

It is worth noting that our Bayesian deep ensembles still retain the distributability of standard deep ensembles. Moreover, our computational and memory costs still scale linearly in dataset size and parameter space dimension, enabling us to work with large scale datasets like Flight Delays \citep{hensman2013gaussian}.

Finally, in this section we only compare the costs associated to different ensemble methods. Ensembles methods are known to be computationally expensive and there has been recent interest in the community to derive new methods \citep{van2020uncertainty, dusenberry2020efficient} that reduce such costs. However, at the time of writing, deep ensembles \citep{balajideepensembles} are state-of-the-art for uncertainty quantification tasks \citep{ovadia2019can}, and hence we believe a comparison of costs between ensemble methods is most appropriate for this work.

\section{Scaling for one-hot targets in classification}\label{app:scaling}
As discussed in \cref{section:classification} and repeated here for completeness: because $\delta(\cdot)$ is untrainable in our NTKGP methods, it is important to match the scale of the NTK $\Theta$ to the scale of the one-hot targets in multi-class classification settings. One can do this either by introducing a scaling factor $\kappa>0$ such that we scale either: 1) $\tilde{f}\leftarrow \frac{1}{\kappa}\tilde{f}$ and so $\Theta\leftarrow \frac{1}{\kappa^2}\Theta$, or 2) $e_c\leftarrow \kappa e_c$ where $e_c\in\mathbb{R}^C$ is the one-hot vector denoting class $c\leq C$. We choose option 2) for our implementation.

To set $\kappa$, for each ensemble method we calculated the mean squared values of baselearner outputs at initialisation, which we define for convenience as $\zeta_0$, on the training set for that particular ensemble method, and tuned $\kappa^2$ (based on validation accuracy) on a small linear scale centered around $C\zeta_0$, where $C$ is the number of classes. This is in order to match the second moments of the random NNs at initialisation with the scaled one-hot targets across the $C$ classes. For example, for NTKGP-param, we set $\zeta_0=\frac{1}{|\X|}\sum_{\vx\in \X}\Theta(\vx, \vx)\in \mathbb{R}^+$.

To illustrate the importance of $\kappa$, in \cref{fig:cifar10_absolute_scale} we present the corresponding results to \cref{fig:cifar10} where instead of setting $\kappa$ dependent on the scale of each ensemble methods' initialised baselearners, as above, we set $\kappa=\frac{1}{|\X|}\sum_{\vx\in \X}\Theta(\vx, \vx){\in} \mathbb{R}$ for all ensemble methods. This is the base $\kappa$ value for NTKGP-param  at initialisation, but note that we did not tune neither $\kappa$ (around this base value) nor weight variance (set at $\sigma_W^2=2$ like He initialisation \cite{He_2015_ICCV}, which has been optimised for standard NNs and hence standard deep ensembles) for \cref{fig:cifar10_absolute_scale}.
    \begin{figure}[h]
\centering
    \hspace*{\fill}%
    \begin{subfigure}{0.35\textwidth}
      \centering
          \includegraphics[width = \linewidth]{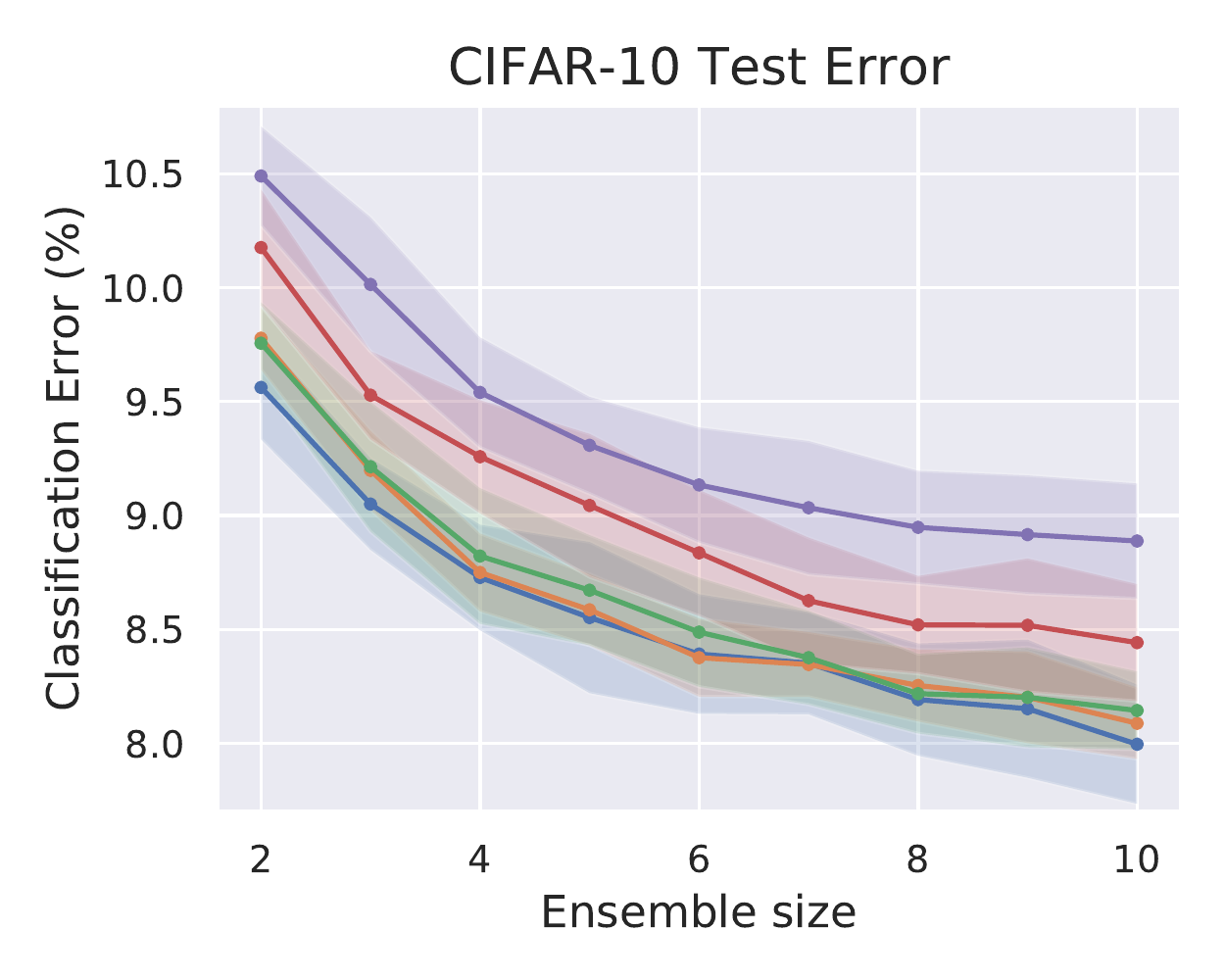}
    \end{subfigure}
    \hfill%
    \begin{subfigure}{0.35\textwidth}
      \centering
      \includegraphics[width = \linewidth]{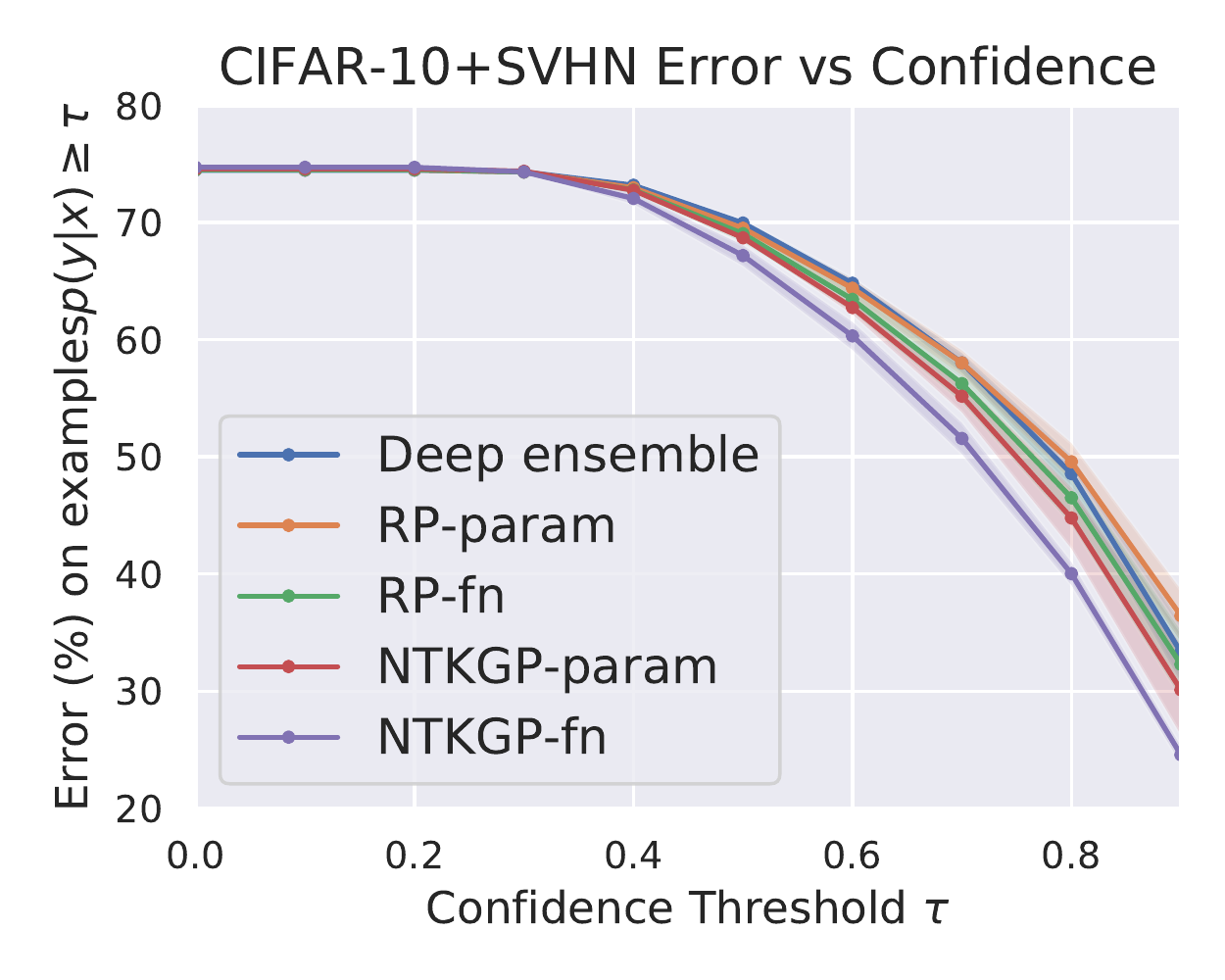}
    \end{subfigure}
    \hspace*{\fill}%
        \reducespaceafterfigure
    \caption{\cref{fig:cifar10} but where regression target scale $\kappa$ is constant across ensemble methods and set to match the second moment of the NTK on the training set at initialisation. Error bars correspond to 5 independent runs.}
    \label{fig:cifar10_absolute_scale}
\end{figure}

In \cref{fig:cifar10_absolute_scale} we see a different results to \cref{fig:cifar10}, as here our NTKGP methods suffer slightly on in-distribution performance but also outperform the baselines methods on out-of-distribution detection. This highlights the importance of the regression target scale when considering classification tasks, and moreover reflects a general theme in our experiments of the trade-off between more aggressive predictions (that tend to perform better on in-distribution) and more conservative predictions (that tend to perform better on out-of-distribution). In our classification methodology, larger $\kappa$ values tend to lead to more confident predictions. We point out that this is an issue that affects all ensemble methods and is not limited to our Bayesian ensembles.

\section{Experimental Details \& additional plots}\label{app:experimental_details}
\subsection{Toy 1d example}
We set ensemble size $K=20$, and train on full batch GD with learning rate $0.001$ for 50,000 iterations under standard parameterisation in Neural Tangents \citep{neuraltangents2020}, with $\sigma_W=1.5$ \& $\sigma_b=0.05$, for $\sigma_W, \sigma_b$ defined as in \cref{app:parameterisations}. In \cref{fig:1d_ensemble_size} we evaluate the impact of the ensemble size on this toy problem for different ensemble methods. We find that, of the two methods that approximate the analytic NTKGP mean predictor (c.f. \cref{table:predcomp}), the approximation of the analytic mean predictor for NTKGP-param degrades compared to RP-param at small ensemble sizes, although the predictive uncertainties
are well matched even at small ensemble sizes. The degradation in mean predictor is unsurprising as there is more (untrainable) noise in the initialised NTKGP baselearners. One simple possible solution to this problem, which we leave for future work, is to use separate baselearners for the mean and uncertainty predictions, like in \citet{Ciosek2020Conservative}.

\begin{figure}[h]
  \centering
      \includegraphics[width=\textwidth]{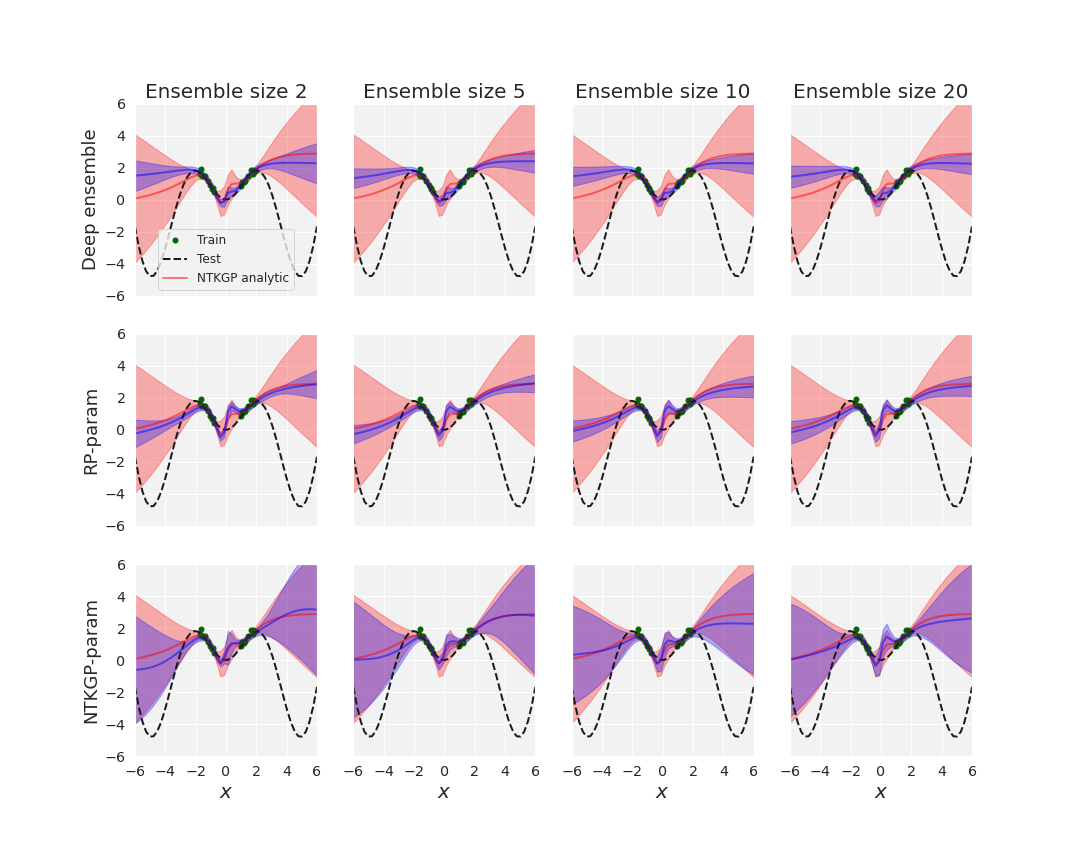}
      \vspace{-1.em}
  \caption{Comparison between ensemble methods (blue) and the analytic NTKGP posterior as the ensemble size is varied on toy example.}
  \label{fig:1d_ensemble_size}
\end{figure}

\subsection{Flight Delays}
Our baselearners are MLPs with 4 hidden layers, 100 hidden units per layer and ReLU activations, and we use standard parameterisation with $\sigma_W=1$ \& $\sigma_b=0.05$, and choose ensemble size $K=5$. We train for 10 epochs with learning rate 0.001, batch size 100 and Adam \citep{kingma2014adam}. For all experiments, all ensemble methods apart from standard deep ensembles \cite{balajideepensembles} are $L^2$ regularised according to Appendix \ref{sec:reg}, with weight decay strength set to $10^{-4}$ for standard deep ensembles.

We use a validation set of size 50k that is sampled uniformly from the training set of size 700k, and early stop baselearner NNs based on validation set loss. Inputs and targets are standardised so that the training data is zero mean and unit variance.

\subsection{MNIST vs. NotMNIST}
For all image classification experiments, we use a $90{-}10\%$ split for the train-validation sets needed for temperature scaling.

Baselearners are MLPs with 2-hidden layers, 200 hidden units per layer and ReLU activations. We standardise data to have mean $0$ and standard deviation $1$
across flattened pixels.

For all ensemble methods, we use standard parameterisation with fixed bias standard deviation $\sigma_b=0.05$, observation noise $\sigma=0.1$ and tune weight variance $\sigma_W^2$ on a small linear scale around $\sigma_W^2=2$. We set observation noise $\sigma=0.1$ for . We train for 20 epochs with batch size 100, learning rate 0.001 and Adam \citep{kingma2014adam}. We do not early stop for any classification experiment, and use the final trained baselearners throughout.

For the analytic NTKGP results, we use the NTK in NTK parameterisation, and use the same observation noise and bias variance as for ensemble methods. However, we fix $\sigma_W^2=2$ and also do not tune target scale $\kappa$ (set to the base value described in \cref{app:parameterisations}) due to computational resources. We also use only half the test sets both for MNIST and NotMNIST due to resource requirements, keeping the ratios of test sizes consistent in order for the error versus confidence plot \cref{fig:mnist} (right) to be comparable. To compute test and out-of-distribution predictions, having obtained the optimal temperature scale $T^*$ and  analytic NTKGP predictions in logit space, $p(\cdot|\X,\Y)$, we approximate the softmax class probability predictions: $\int \text{softmax}(z/T^*)p(dz|\X,\Y)$, by a Monte Carlo ensemble approximation with 100 samples.

 For all classification ensemble  methods, we temperature scale on validation cross entropy for 5 epochs with batch size 100 and learning rate 0.1, whereas for analytic NTKGP we temperature scale for 1000 epochs on full batch size 6000. Like above, we approximate the analytic NTKGP validation predictions (for temperature scaling) by a Monte Carlo ensemble, this time of size 10. We found the various temperature scaling training hyperparameter considerations here to be unimportant to achieve convergence, due to the fact that the temperature scale is a scalar value.
\subsection{CIFAR-10 vs SVHN}
Baselearners are Myrtle-10 CNNs \citep{shankar2020neural} with 100 channel width and ReLU activations. We use  $\sigma_b=0.01$ and set observation noise $\sigma=0.1$. Like for MNIST we tune $\sigma^2_W$ on a small linear scale around $\sigma^2_W=2$. We train using SGD, with momentum parameter 0.9, for 100 epochs and learning rate $0.001$, which is decayed to $0.0002$ after 80 epochs. In the first 5 epochs we raise the learning rate in linear increments from $0.0001$ to $0.001$. We use batch size 125. During training we apply random crops and horizontal flips before standardisation. We do not compare to the analytic NTKGP for the Myrtle-10 CNN due to resource requirements.

\cref{fig:cifar10_vs_svhn_entropies} displays entropy histograms for ensembles trained on CIFAR-10 and tested on in distribution CIFAR-10 test data and out-of-distribution SVHN test data, corresponding to the same experiments as in \cref{fig:cifar10}. As we can see, there is a much less noticeable difference between ensemble methods compared to the simpler MNIST vs NotMNIST case.
\begin{figure}[h]
    \centering
    \begin{subfigure}[b]{0.8\textwidth}
        \includegraphics[width = \linewidth]{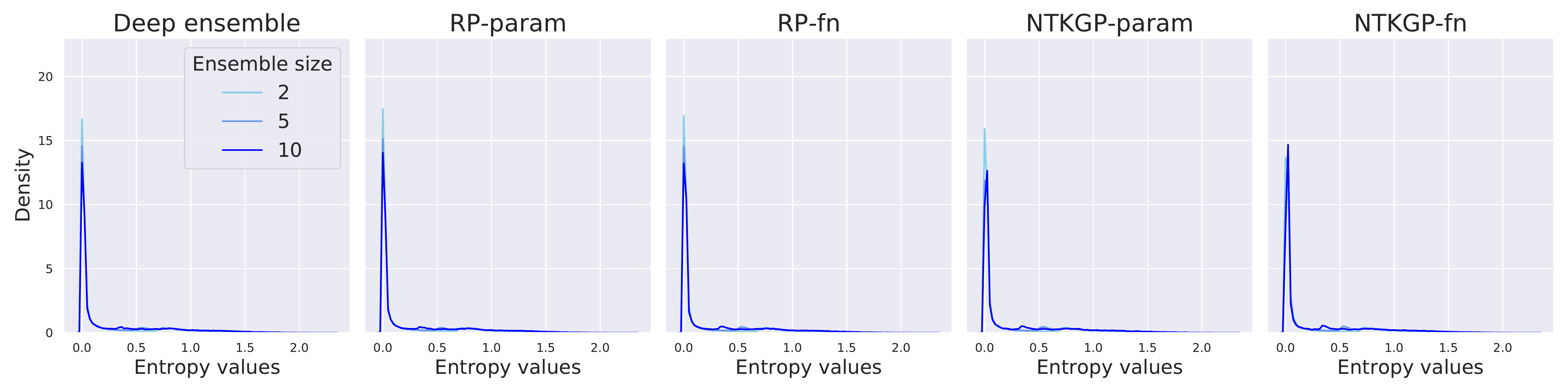}
    \end{subfigure}

    \begin{subfigure}[b]{0.8\textwidth}
        \includegraphics[width = \linewidth]{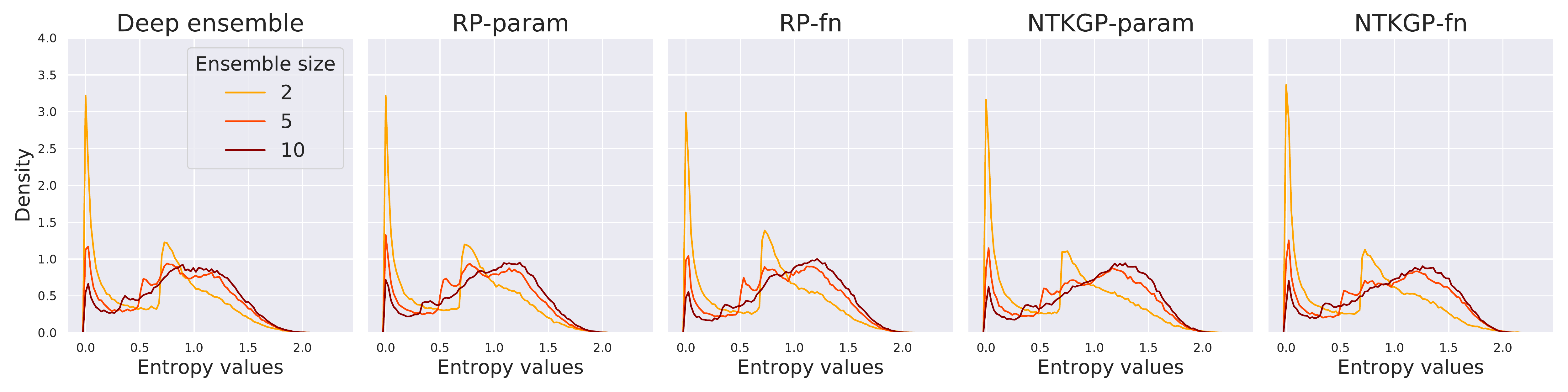}
    \end{subfigure}
    \caption{Histograms of predictive entropy on CIFAR-10 (top) and SVHN (bottom) test sets for different ensemble methods and for different ensemble sizes.}
    \label{fig:cifar10_vs_svhn_entropies}
\end{figure}
\end{document}